\newcommand{\ceil}[1]{\bigg \lceil #1 \bigg \rceil}
\newtheorem{theorem}{Theorem}
\ificcvfinal\pagestyle{empty}\fi
\begin{document}
	
	%%%%%%%%% TITLE
	\title{Smaller Models, Better Generalization}
	
	\author{Mayank Sharma\\
		Indian Institute of Technology, Delhi\\		
		{\tt\small eez142368@iitd.ac.in}
		% For a paper whose authors are all at the same institution,
		% omit the following lines up until the closing ``}''.
		% Additional authors and addresses can be added with ``\and'',
		% just like the second author.
		% To save space, use either the email address or home page, not both
		\and
		Suraj Tripathi\\
		Indian Institute of Technology, Delhi\\		
		{\tt\small surajtripathi93@gmail.com}
		\and
		Abhimanyu Dubey\\
		MIT\\		
		{\tt\small abhimanyu1401@gmail.com}
		\and
		Jayadeva\\
		Indian Institute of Technology, Delhi\\		
		{\tt\small jayadeva@ee.iitd.ac.in}
		\and
		Sai Guruju\\
		Indian Institute of Technology, Delhi\\		
		{\tt\small saiguruju27@gmail.com}
		\and
		Nihal Goalla\\
	    Indian Institute of Technology, Delhi\\		
		{\tt\small goallanihal1995@gmail.com}
	}
	
	\maketitle

\begin{abstract} 
Reducing the network complexity has been a major research focus in recent years with the advent of mobile technology. Convolutional Neural Networks that perform various vision tasks without memory overhaul are the need of the hour. This paper focuses on qualitative and quantitative analysis of reducing the network complexity using an upper bound on Vapnik-Chervonenkis dimension, pruning and quantization. We observe a general trend in improvement of accuracies as we quantize the models. We propose a novel loss function that helps in achieving considerable sparsity at comparable accuracies to that of dense models. We compare various regularizations prevalent in the literature and show the superiority of our method in achieving sparser models that generalize well.
\end{abstract} 

\section{Introduction}
Deep Neural Networks have been very successful in variegation of tasks. They have been applied to Image classification \cite{krizhevsky2012imagenet,he2015delving,simonyan2014very}, Text analytics \cite{pennington2014glove,huang2012improving}, Handwriting generation \cite{graves2013generating}, Image Captioning \cite{karpathy2015deep}, Automatic Game playing \cite{mnih2013playing,silver2016mastering}, Speech Recognition \cite{hannun2014deep}, Machine translation \cite{bahdanau2014neural,sutskever2014sequence} and many others. Bengio et~al. \cite{lecun2015deep} and Schmidhuber \cite{schmidhuber2015deep} provides an extensive review of deep learning and its applications. \\
The representational power of a neural network increases with its depth as is evident from the architectures like Highway Networks \cite{srivastava2015training} (32 layers and 1.25M parameters) and ResNet \cite{he2016deep} (110 layers has 1.7M parameters). Such large number of weights presents a challenge in terms of storage capacity, memory bandwidth and representational redundancy. For example, widely used models like AlexNet Caffemodel is over 200MB, and the VGG-16 Caffemodel is over 500MB.
With advent of mobile technologies and IoT devices the need for faster and accurate computing has arisen. Sparse matrix multiplications and convolutions are a lot faster than their dense counterparts. Furthermore, a sparse model with few parameters gain advantage in terms of better generalization ability thereby preventing overfitting. Effect of various regularizers ($L_0, L_1 and L_2$) on CNN (Convolutional Neural Networks) are studied in \cite{collins2014memory}.\\
In this paper we introduce a novel loss function to achieve sparsity by minimizing a convex upper bound on Vapnik-Chervonenkis (VC) dimension.
We first derive an upper bound on the VC dimension of the classifier layer of a neural network, and then apply this bound on the intermediate layers in the neural networks, in conjunction with the weight-decay ($L_2$ and $L_1$ norms) regularization bound. This result provides us with a novel error functional to optimize over with backpropagation for training neural network architectures, modified from the traditional learning rules.\\

This learning rule adapts the model weights to minimize both empirical error on training data as well as the VC dimesion of the neural network. With the inclusion of a term minimizing the VC dimension, we aim to achieve sparser neural networks, which allow us to remove a large number of synapses and neurons without any penalty on empirical performance.\\

Finally, we demonstrate the consistent effectiveness of the learning rule across a variety of learning algorithms on various datasets across learning task domains. We see that the data dependent rule promotes higher test set accuracies, faster convergence and achieves smaller models across various architectures such as Feedforward (Fully Connected) Neural Networks (FNNs) and Convolutional Neural Networks (CNNs) confirming our hypothesis that the algorithm indeed controls model complexity, while improving generalization performance.\\

The rest of the paper is organised as follows  - in Section \ref{sec:relatedwork} we provide a brief overview of the recent relevant work in complexity control and generalization in deep neural networks, and in Section \ref{sec:vcbound} we provide the derivation for our learning rule, and proof for theoretical bounds. Section \ref{sec:quantization} describes the effect of quantization on VC dimension bound of the network. In the subsequent section \ref{sec:results} we describe our experimental setup and methodology, along with qualitative and quantitative analyses of our experiments.
\section{Related Works}\label{sec:relatedwork}
Compression of deep nets have been widely studied. Network pruning and quantization are the methods of choice. Researchers have used weights and neuronal removal to instigate sparsity. \cite{han2015learning} used iterative deletion of weights and neurons to achieve sparsity. \cite{zhou2016less,scardapane2016group} used group sparse regularization on weights to incorporate sparsity. \cite{sun2016sparsifying} used iterative sparsification based on neural correlations. \cite{liu2014pruning} used optimal brain damage to enforce sparsity. \cite{srinivas2015data} removed redundant neurons based on saliancy of two weight sets. \cite{wolfe2017incredible} used second order Taylor information to prune neurons.\cite{aghasi2016net} pruned the net using sparse matrix transformation keeping the layer input and output close to the original unpruned model.\cite{srinivas2016training} used a bimodal regularizer to enforce sparsity and \cite{babaeizadeh2016noiseout} merged two neurons with high correlations.\\
A rich body of literature exist on quantizing the models as well. \cite{han2015deep} build their model on top of their earlier model, by adding quantization and Huffman coding. \cite{lin2015neural} used weight binarization and quantizing the learned representations in each layers to achieve the same. In their work \cite{rastegari2016xnor} binarized both weights and inputs to the convolutional layers. \cite{mellempudi2017mixed} proposed cluster based quantization method to convert pre-trained full precision weights to ternary weights with minimal loss in accuracy. \cite{hubara2016quantized} quantized weights, activations and incorporated quantized gradients with 6 bits in their training.
\section{Sparsifying Neural Networks through Pruning}\label{sec:vcbound}
In this section we derive an upper bound on the VC dimension $\gamma$. This proof is an extension of the one in \cite{jayadeva2015learning}. Vapnik \cite{vapnik98} showed that the VC dimension $\gamma$ for fat margin hyperplane classifiers with margin $d \geq d_{min}$ satisfies
\begin{equation}\label{eqnh}
\gamma \leq 1 + \operatorname{Min} \bigg( \ceil{\frac{R^2}{d_{min}^2}}, n \bigg)
\end{equation}
Let us consider a dataset $X \in \Re^{M \times n}$ with $M$ samples and $n$ features. The individual samples are denoted by $x^i \in \Re^n$.
where $R$ denotes the radius of the smallest sphere enclosing all the training samples. We first consider the case of a linearly separable dataset. By definition, there exists a hyperplane $w^Tx + b = 0$, parameterized by $w \in \Re^n$ and a bias term $b$ with positive margin $d$ that can classify these points with zero error. We can always choose a value $d_{min} < d$; for all further discussion we assume that this is the case. The samples are assumed to be in a high dimension; this assumption is reasonable because the samples inherently have a large number of features and are thus linearly separable, owing to Cover's theorem \cite{cover1968capacity}, or they have been transformed from the input space to a high dimensional space by using a nonlinear transformation. The case when the samples are linearly separable and in a small dimension is not interesting as these are of a trivial nature.
Thus we have,
\begin{equation}\label{eqnh1}
\gamma \leq 1 + \frac{R^2}{d^2_{min}}
\end{equation}
Let us consider the problem of minimizing the fraction as minimizing the upper bound on VC dimension.
\begin{gather} \label{eqnh2}
\operatorname{Min} \frac{R^2}{d^2_{min}}
\end{gather}
Since, both the numerator and denominator are positive quantities with $d_{min} > 0$ and $d_{min} < d$, we can alternatively write (\ref{eqnh2}) as:
\begin{gather} \label{eqnh3}
\operatorname{Min} \frac{R}{d}
\end{gather}
We simplify the value of the fraction $\frac{R}{d}$, to attain a tractable convex bound in term of the weights of network.
\begin{gather}
\frac{R}{d} =  \bigg(\frac{ \operatorname*{max}_i \|x^i\| }{ \operatorname*{min}_{i} \frac{\|w^Tx^i +b\|}{\|w\|}}\bigg)\\
= \bigg(\frac{ \operatorname*{max}_i \|x^i\| \|w\| }{ \operatorname*{min}_{i} \|w^Tx^i +b\| } \bigg) \label{eqnh4}
\end{gather}
Without proper scaling of $w$ and $b$, we can write the minimum value of distance of correctly classified point to be $1$.
\begin{gather} \label{eqnh5}
\operatorname*{min}_{i} \|w^Tx^i +b\| =1 
\end{gather}
Using (\ref{eqnh5}), we convert (\ref{eqnh4}) to the following optimization problem.
\begin{gather}\label{eqnh6}
\frac{R}{d} = \big(\operatorname*{max}_i \|x^i\| \|w\| \big)
\end{gather}
Since, for two numbers $A$ and $B$, the following inequality holds:
\begin{gather} \label{eqnh7}
\|A\|^2 + \|B\|^2 \geq \|A\|\|B\|
\end{gather}
Applying the inequality (\ref{eqnh7}) to (\ref{eqnh6}), we achieve the following upper bound on the fraction 
\begin{gather} \label{eqnh8}
\frac{R}{d} \leq  \big(\operatorname*{max}_i \|x^i\|^2 + \|w\|^2 \big)
\end{gather}
For a separating hyperplane $w^Tx^i +b$ that passes through the data, the maximum distance of the point from the plane, is greater than the maximum radius of the data. Thus we can extend the bound on radius of dataset as:
\begin{gather}\label{eqnh9}
\operatorname*{max}_{i}\|x^i\| \leq \ \operatorname*{max}_{i}  \frac{\|w^Tx^i +b\|}{\|w\|}
\end{gather}
Using the bound derived in (\ref{eqnh9}), we can write (\ref{eqnh8}) as:
\begin{gather}\label{eqnh10}
 \frac{R}{d_{min}} \leq \bigg(\operatorname*{max}_i  \frac{\|w^Tx^i +b\|^2}{\|w\|^2} + \|w\|^2 \bigg)
\end{gather}
For positive numbers $a_i,\,\, i \in \{1, \ldots, N\}$, the following inequality holds,
\begin{gather}
\operatorname{Max} a_i \leq \sum_{i=1}^{N} a_i \label{eqnh11}
\end{gather}
Using (\ref{eqnh11}) in (\ref{eqnh10}), we have the following bound
\begin{gather}\label{eqnh12}
\frac{R}{d} \leq \bigg( \sum_{i=1}^{M} \frac{\|w^Tx^i +b\|^2}{\|w\|^2} + \|w\|^2 \bigg)
\end{gather}
Finally, we arrive at the convex and differentiable version of the bound on VC dimension, that can be minimized using stochastic gradient descent and can used in conjugation with various architectures. The following bound acts as a data dependent regularizer when used alongside the loss function minimization. Here we present the effectiveness of the bound for reducing the number of connections of the network. %This bound is similar to the one presented in %\cite{anonymous}, where the focus of the paper was to show generalization impovement with application of the VC bound. However, we present an alternative proof to the upper bound on VC dimension and show its advantage in terms of network sparsity achieved through pruning and quantization thereby containing the capacity of a deep neural network.

\begin{gather} \label{eqnh13}
\Gamma = \operatorname*{Min} \big( \sum_{i=1}^{M} \|w^Tx^i +b\|^2 + C \|w\|^2 \big)
\end{gather}

\subsection{A bound on Neural network}
We now use the bound (\ref{eqnh13}) in the context of a multi-layer feedforward neural network. Consider a neural with multiple hidden layers for the problem of multiclass classification with $K$ classes. Let the number of neurons in the penultimate layer be denoted by $l$, and let their outputs be denoted by $z_1, z_2, \ldots, z_l$; let the corresponding connecting weights for the classifier layer be denoted by $w_{c_i} \in \Re^K,\,\, \forall \,\, i \in \{1, \ldots, l\}$ respectively. One may view the outputs of this layer as a map from the input $x$ to $\phi(x)$, i.e. $z = \phi(x)$. The biases of at the output are denoted by $b_{c_i} \in \Re \,\, \forall \,\, i \in \{1, \ldots, K\}$. The $j^{th}$ score for $i^{th}$ input pattern at the output is given by $net_j^i = w_{c_j}^T z^i + b_{c_j}$.
For the purposes of this paper, we use multiclass hinge loss following the works of Tang et~al., \cite{tang2013deep}, where the authors state superiority of hinge loss over softmax loss.
Thus applying the bound (\ref{eqnh13}) on the classification layer of neural network, lead us to the following optimization problem:
\begin{gather} 
\operatorname{Min} E =  \sum_{i=1}^{M}\sum_{j\neq y_i}^{K} \max(0, 1 - net_{y_i}^i + net_j^i) + \nonumber\\
\frac{C}{2}\sum_{j=1}^{K}\|w_{c_j}\|_2^2 + \frac{D}{2} \sum_{i=1}^{M}\sum_{j=1}^{K} (net_j^i)^2 \label{eqnh14}
\end{gather}

\subsection{Application of the bound on hidden layers}
The great advantage with this bound is its ability to be applied to pre-activations in the net across all the layers. When applied to the pre-activations in a net, it is interpreted as a $L_2$ regularizer. It forces pre-activations to be close to zero. For ReLu activation functions $max(0,x)$, our data dependent regularizer forces the pre-activations for each layer to be close to zero. Thus, it in turn enforces sparsity at neuronal levels in the intermediate layers. In principle, during back-propagation this tantamount to solving a least squares problem for each neuron where the targets are all $0$. Consider a feedforward architecture with $P$ hidden layers. For an intermediate layer $h$, the let the activations of the layer $h-1$ with $l_{h-1}$ neurons be $z_{h-1} \in \Re^{l_{h-1}}$. Let $w_{h_i} \in \Re^{l_{h-1}},\,\, \forall \,\, i \in \{1, \ldots, l_h\}$ be the weights of the layer $h$ going from $h-1$ to $h$ and $b_{h_i}$ be the set of biases. Let us assume that the targets for each sample for each pre-activations $a_{h_i}\,\, \forall \,\, i \in \{1, \ldots, l_h\}$ is $0$. Hence, the application of (\ref{eqnh13}) on pre-activations with  ReLu activation function, is equivalent to the following minimization problem.
\begin{gather}\label{eqnh15}
\operatorname{Min} \frac{1}{2}\sum_{j=1}^{l_h}\|w_{h_j}\|_2^2 + \frac{D}{2} \sum_{i=1}^{M}\sum_{j=1}^{l_h} \left(0 - (w_{h_j}^T z_{h-1}^i+b_{h_j}) \right)^2
\end{gather}

With the application of VC bound (\ref{eqnh15}) to all the layers, the final minimization problem can be derived from (\ref{eqnh14}) as:
\begin{gather} \label{eqnh16}
\operatorname{Min} E = \frac{C}{2}\sum_{h=0}^{P-1}\sum_{j=1}^{l_h}\|w_{h_j}\|_2^2 + \nonumber \\ 
\frac{C}{2}\sum_{j=1}^{K}\|w_{c_j}\|_2^2 + \frac{D}{2} \sum_{i=1}^{M}\sum_{l=0}^{P-1}\sum_{j=1}^{l_h} (w_{h_j}^T z_{h-1}^i + b_{h_j})^2 + \nonumber \\
\frac{D}{2} \sum_{i=1}^{M}\sum_{j=1}^{K} (net_j^i)^2 + \sum_{i=1}^{M}\sum_{j\neq y_i}^{K} \max(0, 1 - net_{y_i}^i + net_j^i) 
\end{gather}
\section{Trade-off between margin and error: Role of quantization}\label{sec:quantization}
Consider a binary classification problem with $M$ samples where $i^{th}$ sample is denoted as $x^i \in \Re^n$ and its corresponding label is represented as $y_i \in \{-1,1\}$. Let us define a fat margin hyperplane classifier denoted by $\sum_{j=1}^{n} (w_jx_j) + b = 0$ where, $w_j \in \Re \,\, \forall \,\, j \in \{1,\ldots,n\}$ be the weights and $b \in \Re$ be the bias term. Let $w_j^Q$ be the quantized weights and $b^Q$ be the quantized bias term. Without loss of generality, we can consider hyperplanes passing through the origin. To see that this is possible, we augment the co-ordinates of all samples with an additional dimension or feature whose value is always $1$, i.e. the samples are given by $\hat{x}^i \leftarrow \{x^i; 1\}, i = 1, 2,\ldots, M$; also, we assume that the weight vector is $(n+1)$-dimensional, i.e. $u \leftarrow \{w; b\}$. Thus, the classifier then becomes $\sum_{j=1}^{n+1} (u_j\hat{x}_j) = 0$. Following the above notation, quantized version of vector $u$ is denoted as $u^Q$.
\begin{theorem}\label{th2}
	Consider full precision and a quantized fat margin classifiers with upper bounds on VC dimensions as $\Gamma$ and $\Gamma^Q$. If $(|u_j|-|u_j^Q|) \geq 0 \,\, \forall \,\, j \in \{1,\ldots,n\}$, then the quantized classifier has smaller VC bound ($\Gamma^Q < \Gamma$).
\end{theorem}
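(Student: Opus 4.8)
\emph{Proof proposal.} The plan is to write both bounds in the augmented coordinates of Section~\ref{sec:quantization} by specializing the data-dependent VC bound (\ref{eqnh13}) to the single linear (classifier) map $u^{T}\hat{x}=0$. Evaluated at the full-precision weights this reads
\[
\Gamma \;=\; \sum_{i=1}^{M}\bigl(u^{T}\hat{x}^{i}\bigr)^{2} \;+\; C\,\|u\|^{2},
\]
and identically $\Gamma^{Q} = \sum_{i=1}^{M}\bigl((u^{Q})^{T}\hat{x}^{i}\bigr)^{2} + C\,\|u^{Q}\|^{2}$. It then suffices to show that neither of the two summands increases under the magnitude-reduction hypothesis and that at least one decreases strictly; subtracting $\Gamma^{Q}$ from $\Gamma$ then yields $\Gamma^{Q}<\Gamma$.

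The weight-decay summand is the easy half. From $|u_{j}|-|u_{j}^{Q}|\geq 0$ we get $(u_{j}^{Q})^{2}\leq u_{j}^{2}$ coordinate by coordinate, hence $\|u^{Q}\|^{2}=\sum_{j}(u_{j}^{Q})^{2}\leq\sum_{j}u_{j}^{2}=\|u\|^{2}$; and if the quantizer actually moves at least one coordinate (otherwise $u^{Q}=u$ and there is nothing to prove) this inequality is strict, contributing $C\,(\|u\|^{2}-\|u^{Q}\|^{2})>0$ to $\Gamma-\Gamma^{Q}$.

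The delicate half --- and the step I expect to be the real obstacle --- is the data term $\sum_{i}(u^{T}\hat{x}^{i})^{2}$, since coordinate-wise magnitude shrinkage does not by itself control a squared inner product: the contributions $u_{j}\hat{x}^{i}_{j}$ can have mixed signs, and shrinking some of them may enlarge the sum. To close this I would either (i) invoke, as is standard for quantization maps, that the quantizer preserves signs, $\operatorname{sign}(u_{j}^{Q})=\operatorname{sign}(u_{j})$, write $u_{j}^{Q}=\alpha_{j}u_{j}$ with $\alpha_{j}\in[0,1]$, and then group the positive and negative contributions to each $u^{T}\hat{x}^{i}$ and bound them separately so that $|(u^{Q})^{T}\hat{x}^{i}|\leq|u^{T}\hat{x}^{i}|$; or (ii) fall back to the coarser form (\ref{eqnh8}) of the bound, $\tfrac{R}{d}\leq\operatorname*{max}_{i}\|x^{i}\|^{2}+\|w\|^{2}$, in which --- after normalizing each classifier so that its minimum functional margin equals $1$ as in (\ref{eqnh5}) --- the data enters only through the classifier-independent quantity $\operatorname*{max}_{i}\|x^{i}\|^{2}$, so that $\Gamma$ and $\Gamma^{Q}$ differ solely in the $\|w\|^{2}$ term already handled above. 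Route (ii) is cleaner and I would attempt it first; route (i) is tighter but needs the extra sign hypothesis.

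Finally I would combine the two halves through the single identity
\[
\Gamma-\Gamma^{Q}=\sum_{i=1}^{M}\Bigl[(u^{T}\hat{x}^{i})^{2}-\bigl((u^{Q})^{T}\hat{x}^{i}\bigr)^{2}\Bigr]+C\bigl(\|u\|^{2}-\|u^{Q}\|^{2}\bigr),
\]
where the first group is $\geq 0$ by the data-term argument and the second is $>0$ by the weight-decay argument, giving $\Gamma^{Q}<\Gamma$. The same decomposition also shows monotonicity of the bound along any chain of magnitude-non-increasing quantizations, which is the qualitative conclusion the theorem is really after.
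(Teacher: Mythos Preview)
Your decomposition of $\Gamma-\Gamma^{Q}$ into a weight-decay part and a data part, and your treatment of the weight-decay part via $(u_j^{Q})^{2}\le u_j^{2}$, coincide with the paper's argument.

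For the data term the paper follows your route~(i), not route~(ii), but through a different intermediate step than your ``group the positive and negative contributions'' sketch. It uses sign preservation of the quantizer (so that $\operatorname{sign}(\Delta u_j)=\operatorname{sign}(u_j)$ whenever $|u_j|\ge|u_j^{Q}|$) together with the standing assumption that both classifiers separate the data, and argues that for every sample the \emph{signed score difference} is nonnegative, $y_i\sum_j\Delta u_j\,\hat{x}^{i}_j\ge 0$. Since both classifiers give $\hat{x}^{i}$ the correct sign, this yields $|u^{T}\hat{x}^{i}|=y_i\,u^{T}\hat{x}^{i}\ge y_i\,(u^{Q})^{T}\hat{x}^{i}=|(u^{Q})^{T}\hat{x}^{i}|$ pointwise, hence $\|u^{T}x^{i}\|_2^{2}\ge\|(u^{Q})^{T}x^{i}\|_2^{2}$, which is exactly the data-term inequality the paper records as~(\ref{norm_uTx_inq}). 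Your grouping idea, by contrast, does not go through as stated: with $u=(1,1)$, $u^{Q}=(\tfrac12,1)$, $\hat{x}=(1,-1)$ one has $\alpha_j\in[0,1]$ and preserved signs yet $|(u^{Q})^{T}\hat{x}|=\tfrac12>0=|u^{T}\hat{x}|$. The paper's mechanism sidesteps this by tying the comparison to the label $y_i$ rather than attempting a coordinate-wise bound; its justification of the key inequality $y_i\,\Delta u^{T}\hat{x}^{i}\ge 0$ is itself only heuristic, but that is the argument you are being compared against.

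Route~(ii), retreating to the looser bound~(\ref{eqnh8}) so that the data contribution becomes the classifier-independent term $\max_i\|x^{i}\|^{2}$, is not used in the paper. It is cleaner, as you say, but it would establish a different (weaker) statement than the theorem, which is phrased for the data-dependent $\Gamma$ of~(\ref{eqnh13}).
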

\begin{proof}
	Given a set of linearly separable data points and the two fat margin classifiers, former with full precision and latter with quantized set of weights. If the predicted label for each data point assigned by each individual classifiers is the same, which implies that the two classifiers have same accuracies, then the differences in the scores for each sample multiplied by its individual class should be positive. 
	%Subtracting  (\ref{fat_uQ}) from eq. (\ref{fat_Q}), we get,
	\begin{gather}
		y_i \sum_{j=1}^{n+1} (\Delta u_j\hat{x}_j^i) \geq 0 \,\, \forall \,\, i \in \{1,\ldots,M\} \implies \label{quant_delta}\\		
		\begin{cases}
			\sum_{j=1}^{n+1} (\Delta u_j\hat{x}_j^i) \geq 0 \,\, \forall \,\, i \in \{1,\ldots,M \,:\, y_i = 1\} \\
			\sum_{j=1}^{n+1} (\Delta u_j\hat{x}_j^i) \leq 0 \,\, \forall \,\, i \in \{1,\ldots,M \,:\, y_i = -1\}
		\end{cases}		
    \end{gather}
    where, $\Delta u_j = u_j- u_j^Q$.\\
    It can be easily shown that (\ref{quant_delta}) is true if, 
    \begin{gather}
	    (|u_j|-|u_j^Q|) \geq 0 \,\, \forall \,\, j \in \{1,\ldots,n\} \label{cond1}
    \end{gather}
    The condition (\ref{cond1}), translates to the fact that we assign smaller number mantissa bits to the weights or during reduction in fraction bits $|u_j^Q|$ is smaller than $|u_j|$. The argument for this condition comes from the fact that if (\ref{cond1}) holds then the sign of $\Delta u_j$ remains the same as that of $u_j$ or $u_j^Q$. Now since quantization does not allow flipping of signs of each individual bits, (\ref{cond1}) allows for the same sign of the sum given by eq. (\ref{quant_delta}).    
    This implies,
    \begin{gather}
	    \|u\|_2^2 \geq \|u^Q\|_2^2 \label{norm_inq}\\ 
	    \|u^Tx^i\|_2^2 \geq \|{u^Q}^Tx^i\|_2^2 \,\, \forall \,\, i \in \{1,\ldots,M\} \label{norm_uTx_inq}
    \end{gather}
    From, eq. (\ref{eqnh13}) where we define $\Gamma = C\|u\|_2^2 +  \|u^Tx^i\|_2^2 $, analogous to it, the quantized counterpart can be defined as $\Gamma^Q = C \|u^Q\|_2^2 +  \|{u^Q}^Tx^i\|_2^2 $. Now, using eq. (\ref{norm_inq}) and eq. (\ref{norm_uTx_inq}), we have,
    \begin{gather}
    \Gamma^Q \leq \Gamma
    \end{gather}
    Thus by introducing the quantization, one can reduce the complexity of the classifier. This is also evident from the fact that the size of hypothesis class $H$ reduces as the precision is reduced.
\end{proof}

\section{Empirical Analysis and Observations}\label{sec:results}
We determine the effectiveness of network pruning and quantization on various network architectures like Convolutional Neural networks (CNNs) and fully-connected neural (FNN) nets using various data independent regularizers such as L1 norm and L2 norm on weights and dropout and the proposed data dependent regularizer (\ref{eqnh13}).

\subsection{Setup and Notation}
All our experiments are run on a GPU cluster with NVIDIA Tesla K40 GPUs, and implementations were done using the assistance of the Caffe \cite{jia2014caffe} library for CNNs, while the experiments for FNNs were done using Tensorflow \cite{abadi2016tensorflow} and quantization of FNNs was implemented using Matlab \cite{guide1998mathworks}. \\
\textbf{Hyperparamter settings}: The two main hyperparameters in our experiments are $C$ and $D$, which are described in section \ref{sec:vcbound}. The two hyperparameters were tuned in the range of $[10^{-04},10^{-01}]$ and  $[10^{-08},10^{-04}]$ in the multiples of $10$. The other parameters such as dropout rate was kept at their default values for densely connected nets and quicknet. The learning rate was tuned for two values namely $10^{-02}$ and $10^{-01}$. For CNNs the learning rate was multiplied by $0.1$ after every $100000$ iterations, whereas for FNNs the learning rate was decreased as $\frac{1}{epoch}$ after every epoch (one complete pass of data). The total number of iterations was kept to be $230000$ for CNNs and $500$ epochs for FNNs.\\
The notation used for simplicity in understanding experimental results is given as,
\begin{table}[htbp]
	\centering
	\begin{tabular}{|c|c|}
		\hline
		Symbols & Meaning \\
		\hline
		H    & Hinge loss \\
		W2    & $L_2$ regularization  \\
		W1    & $L_1$ regularization  \\
		LCL  & LCNN applied only on last layer \\
		LCA  & LCNN applied on all layers \\
		D     & Dropout \\
		BN    & Batch normalization \\
		\hline
	\end{tabular}%
	\caption{Tabular representation of notation.}
	\label{tab:notations1}%
\end{table}%

\subsection{Network Pruning}
To analyse the efficacy of our regularizer in attaining sparsity we perform pruning of the network after training has finished. Firstly, we select a minimum weight threshold of $1e-03$. Then, we calculate the absolute value of weights in each layer, subsequently we divide the difference between maximum value of weights in each layer and the minimum threshold value into 50 (for FNNs) or 100 (for CNNs) steps. In the last step, we loop over these 50 steps and prune the weights whose absolute magnitude is below the step value.
\subsubsection{CNNs: Datasets}
Our first set of experiments are performed on image classification task using CNNs. Table (\ref{tab:datasets_CNN}) describes the standard image classification dataset used in the pruning and quantization experiments. 
\begin{table}[htbp]
	\centering
	\caption{Dataset used for CNN experiments}
	\scalebox{0.7}{
		\begin{tabular}{|l|r|r|r|r|r|}
			\hline
			name  & \multicolumn{1}{|l|}{features} & \multicolumn{1}{|l|}{classes} & \multicolumn{1}{|l|}{train size} & \multicolumn{1}{|l|}{val size} & \multicolumn{1}{|l|}{test size} \\
			\hline
			Cifar 10 \cite{krizhevsky2009learning}   & 32$\times$32$\times$3   &10     & 50000 & 5000  &5000 \\	
			
			\hline
		\end{tabular}%
	}
	\label{tab:datasets_CNN}%
\end{table}%

\subsubsection{CNNs: Experiments}
We studied the effect of pruning and quantization on two architectures of CNNs, namely Caffe quicknet \cite{jia2014caffe} and Caffe implementation of densly connected convolutional nets \cite{huang2016densely} with 40 layers. We study various regularization and found that data dependent regularization achieves maximum sparsity, thus maximum compression ratio when compared to its contemporary regularizations. \\

Table \ref{tab:c_10_5_comp_ratio} shows the compression ratio achieved when we prune the trained model. $L_2$ weight regularization achieves the best compression followed by our data dependent regularizer, whereas table \ref{tab:c_10_5_acc}  shows the accuracies, our data dependent regularizer reaches the best accuracy in the pool, keeping up the sparsity.
We compare the effect of pruning and quantization on various regularizers visually using 2 dimensional tSNE plots of the final layer of densely connected CNNs. Figure (\ref{fig:tSNE_visualization}) describes the results. Here we observe that data dependent regularizer allows forming of compact clusters thus achieving better generalizations for Cifar 10 dataset. The plots for pruned and quantized networks are visually similar, yet on closer inspection one finds that some of the clusters like Automobile, Horse, Cat and Airplane gets better clusters in terms of compactness and better separability than their unpruned and un-quantized counterparts.

% Table generated by Excel2LaTeX from sheet 'Sheet8'
\begin{table}[htbp]
	\centering
	\caption{Compression ratio for cifar 10 quick net model}
	\scalebox{0.7}{
	\begin{tabular}{|l|r|r|}
		\hline
		& \multicolumn{1}{|l|}{Pruning} & \multicolumn{1}{|l|}{Quantization} \\
		\hline
		S     & 1.41  & 1.28 \\
		S + LCA & 1.29  & 1.07 \\
		S + W & \textbf{6.95}  & \textbf{6.03} \\
		S + W + BN & 1.92  & 2.33 \\
		S + W + BN + LCA & 1.33  & 1.93 \\
		S + W + BN + LCA + D & 1.16  & 2.20 \\
		S + W + D & 3.20  & 1.46 \\
		S + W + D + BN & 1.56  & 2.48 \\
		S + W + D + LCA & 3.77  & 1.53 \\
		S + W + D + LCL & 2.65  & 1.05 \\
		S + W + LCA & 1.89  & 1.04 \\
		S + W + LCL & 3.95  & 1.08 \\
		\hline
	\end{tabular}%
}
	\label{tab:c_10_5_comp_ratio}%
\end{table}%

% Table generated by Excel2LaTeX from sheet 'Sheet9'
\begin{table}[htbp]
	\centering
	\caption{Accuracies for cifar 10 quick net model}
	\scalebox{0.7}{
	\begin{tabular}{|l|r|r|r|}
		\hline
		& \multicolumn{1}{|l|}{Original acc} & \multicolumn{1}{|l|}{Pruned acc} & \multicolumn{1}{|l|}{Quantization acc} \\
		\hline
		S     & 0.73  & 0.72  & 0.71 \\
		S + LCA & 0.77  & 0.77  & 0.75 \\
		S + W & 0.77  & 0.76  & 0.76 \\
		S + W + BN & \textbf{0.80 } & 0.79  & 0.77 \\
		S + W + BN + LCA & 0.78  & 0.77  & 0.73 \\
		S + W + BN + LCA + D & 0.79  & 0.78  & 0.78 \\
		S + W + D & 0.77  & 0.76  & 0.76 \\
		S + W + D + BN & 0.79  & 0.78  & 0.79 \\
		S + W + D + LCA & 0.74  & 0.73  & 0.79 \\
		S + W + D + LCL & 0.78  & 0.73  & 0.77 \\
		S + W + LCA & 0.77  & 0.76  & 0.78 \\
		S + W + LCL & \textbf{0.79}  & 0.78  & 0.79 \\
		\hline
	\end{tabular}%
}
	\label{tab:c_10_5_acc}%
\end{table}%

Table \ref{tab:cifar10_densenet} shows the accuracies of Cifar10 before and after pruning and quantization on densely connected CNNs \cite{huang2016densely}. We observe that our regularization performs equally well when used in conjugation with dropout and $L_2$ weight regularizer.
% Table generated by Excel2LaTeX from sheet 'Sheet10'
\begin{table}[htbp]
	\centering
	\caption{Accuracies for cifar 10 densely connected CNN }
	\scalebox{0.7}{
	\begin{tabular}{|l|r|r|r|}
		\hline
		& \multicolumn{1}{|l|}{Original acc} & \multicolumn{1}{|l|}{Pruned Acc} & \multicolumn{1}{|l|}{Quantization acc} \\
		\hline
		H +D  & 0.901 & 0.887 & 0.901 \\
		H +W +D & \textbf{0.924 }& 0.913 & 0.923 \\
		H +W +LCL +D &\textbf{ 0.924} & 0.914 & 0.920 \\
		H + W & 0.900 & 0.886 & 0.897 \\
		H + W + LCL & 0.895 & 0.888 & 0.889 \\
		H +W1 & 0.866 & 0.853 & 0.857 \\
		H +W1 +LCL & 0.857 & 0.847 & 0.854 \\
		\hline
	\end{tabular}%
}
	\label{tab:cifar10_densenet}%
\end{table}%

Figure \ref{fig:pruned_quant_acc} shows the accuracies of various algorithms with the total number of bits after we perform the first round of pruning. We see that our regularizer has the best set of accuracies (H +W + LCL) among all the algorithms and is quite robust to the changes in the total number of bits.
\begin{figure}[b]
	\centering
	\includegraphics[scale=0.2]{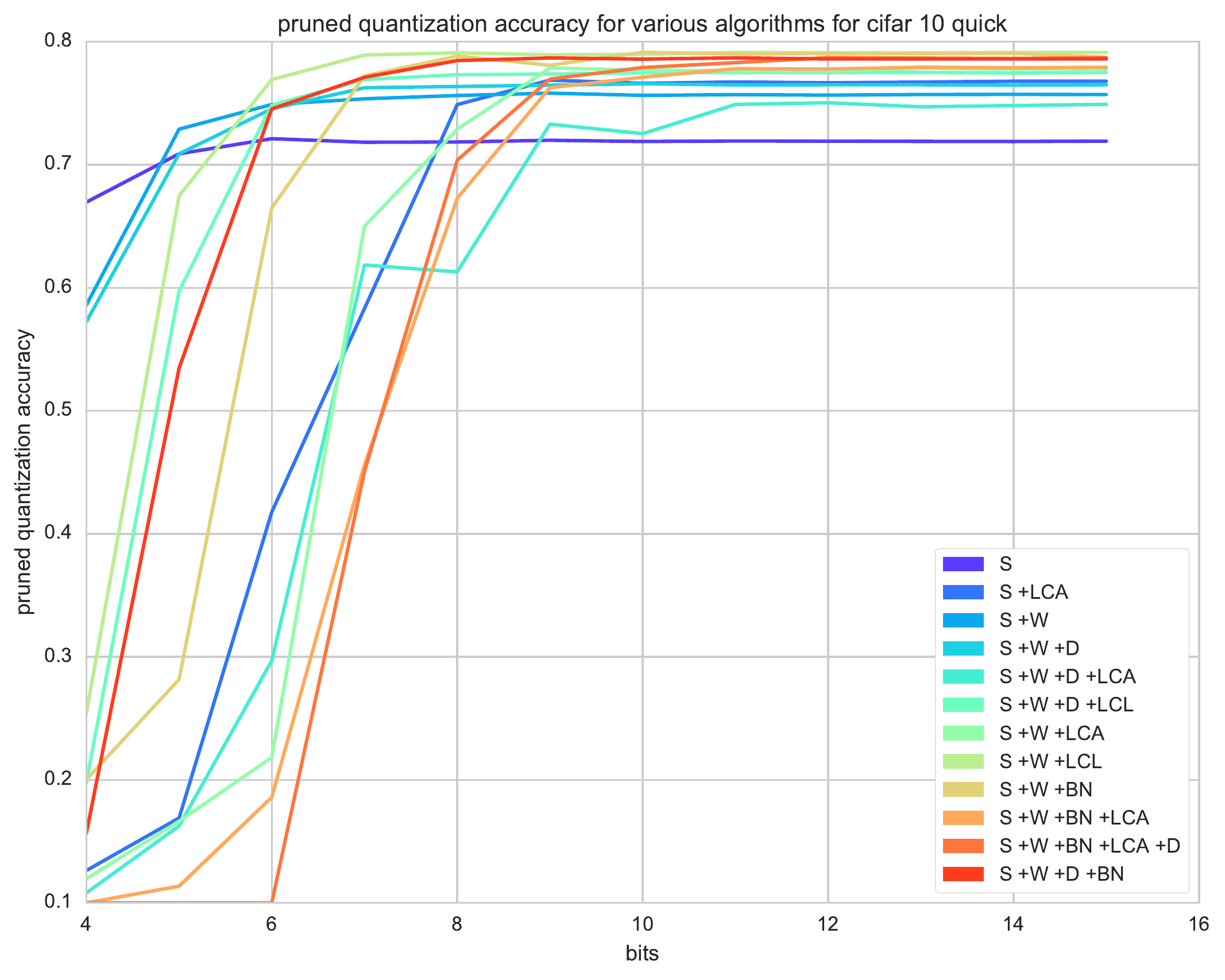}
	\caption{Accuracies for various algorithms after pruning and then quantizing the number of bits}
	\label{fig:pruned_quant_acc}
\end{figure}

\begin{figure*}[t]
	\centering
	\begin{subfigure}[b]{0.22\textwidth}
		\centering
		\includegraphics[scale=0.08]{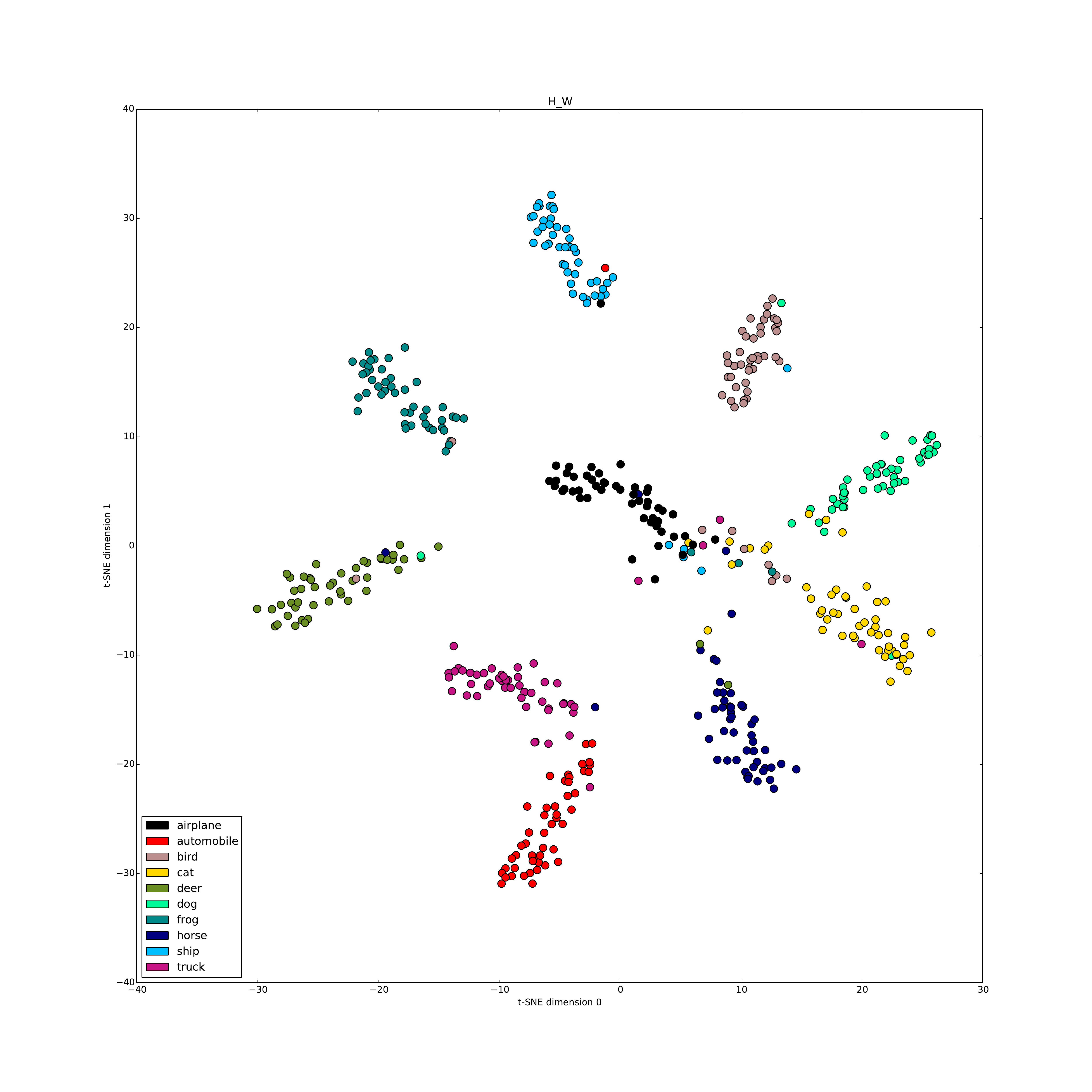}
		\caption{H + W2}
		\label{fig:tsne_c10_dense_H_W}
	\end{subfigure}
	%add desired spacing between images, e. g. ~, \quad, \qquad, \hfill etc. 
	%(or a blank line to force the subfigure onto a new line)
	\begin{subfigure}[b]{0.22\textwidth}
		\centering
		\includegraphics[scale=0.08]{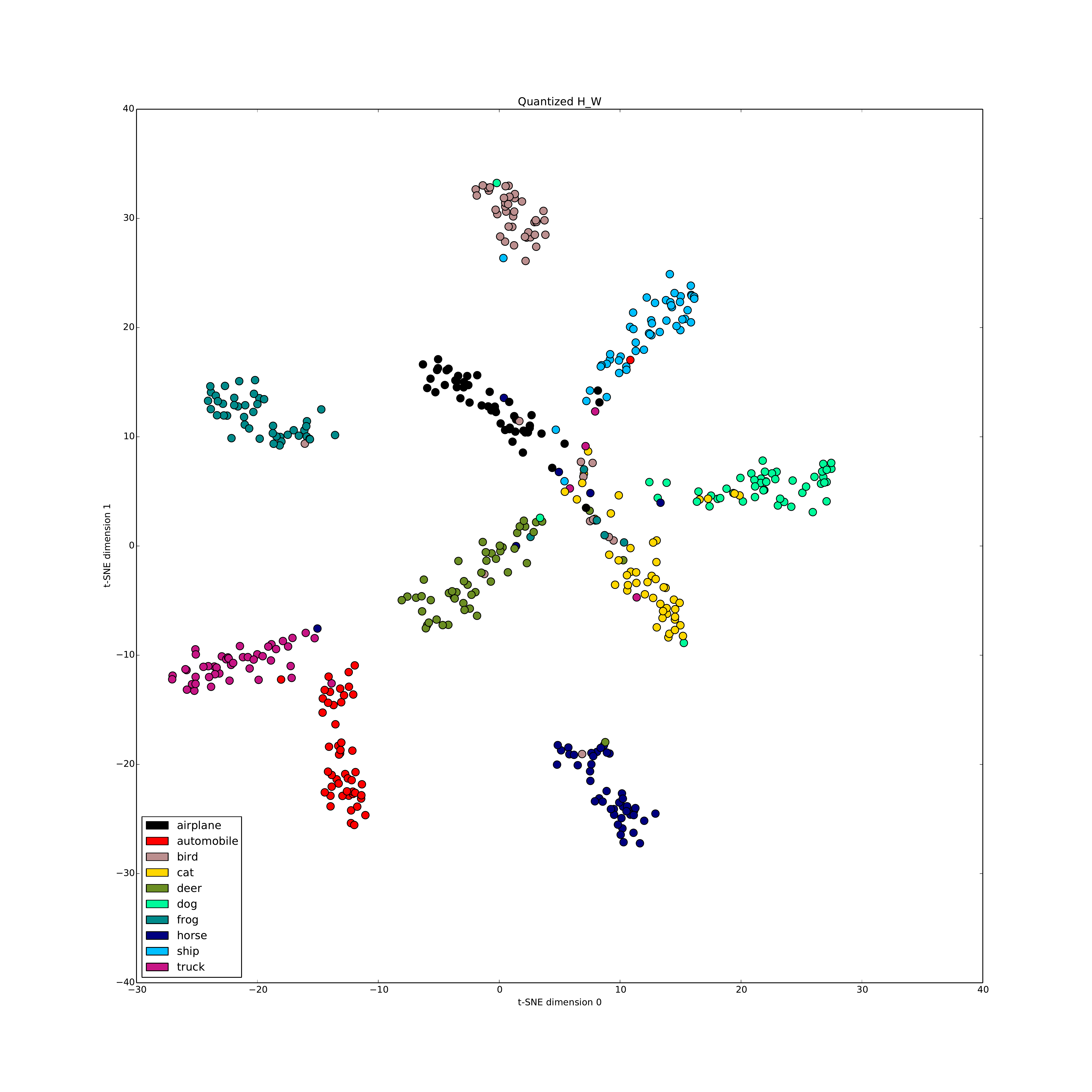}
		\caption{H + W2 + P + Q}
		\label{fig:tsne_c10_dense_H_W_Q}
	\end{subfigure}
	\begin{subfigure}[b]{0.22\textwidth}
		\centering
		\includegraphics[scale=0.08]{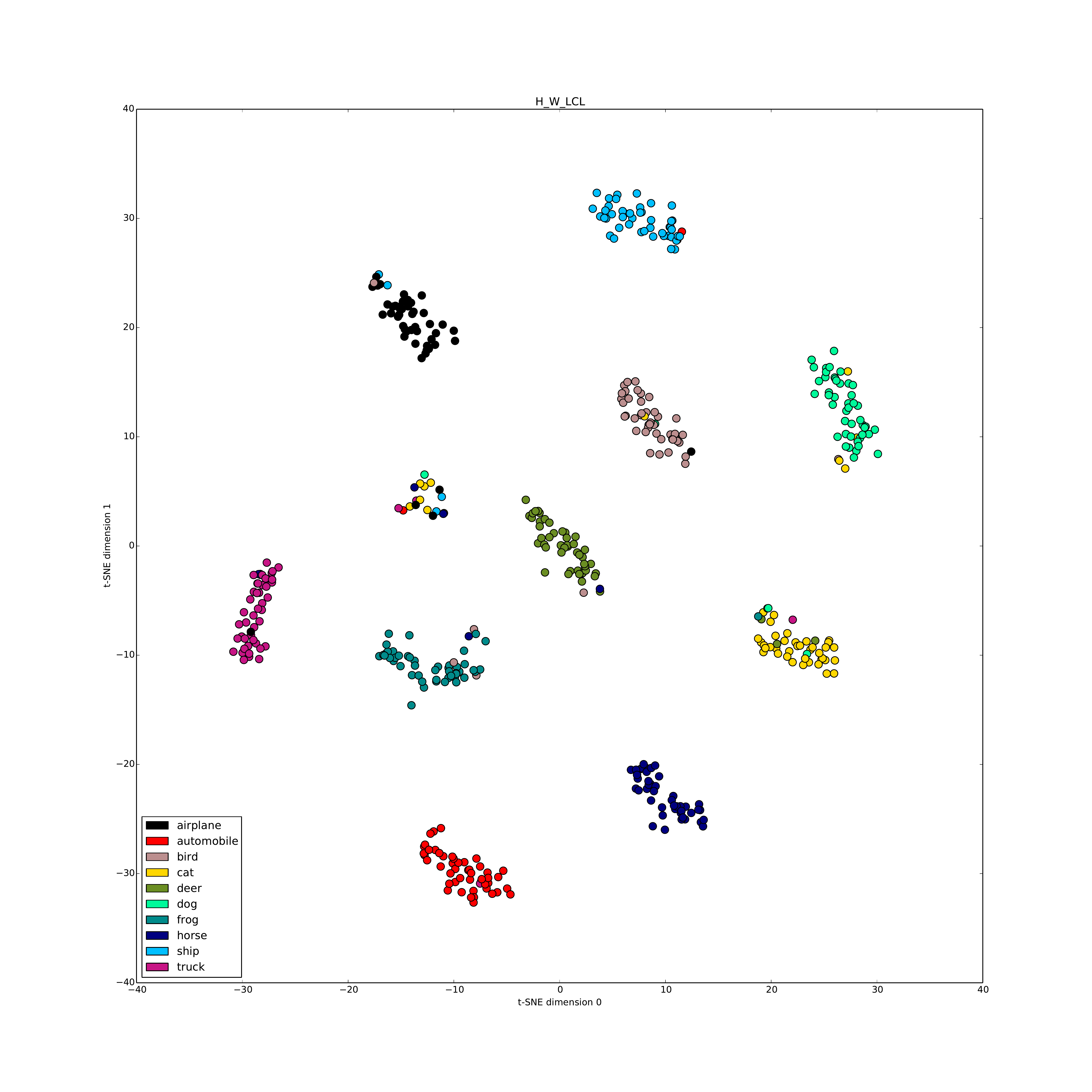}
		\caption{H + W2 + LCL}
		\label{fig:tsne_c10_dense_H_W_LCL}
	\end{subfigure}
	%add desired spacing between images, e. g. ~, \quad, \qquad, \hfill etc. 
	%(or a blank line to force the subfigure onto a new line)
	\begin{subfigure}[b]{0.22\textwidth}
		\centering
		\includegraphics[scale=0.08]{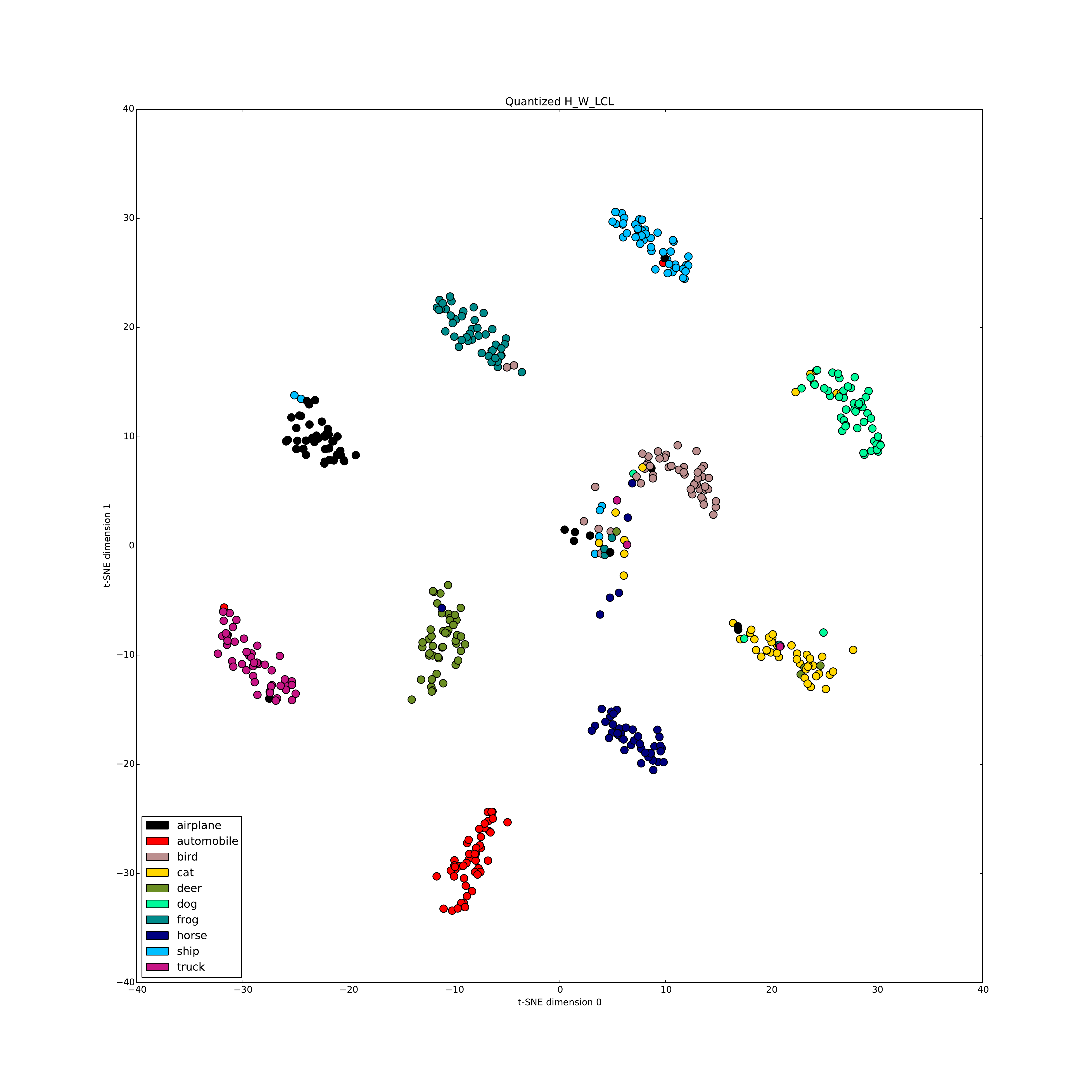}
		\caption{H + W2 + LCL + P + Q}
		\label{fig:tsne_c10_dense_H_W_LCL_Q}
	\end{subfigure}

	\caption{ tSNE visualization of last layer in densenet \cite{huang2016densely} for 50 random test samples from each class of Cifar 10 for various regularizations, here the notation in figures correpond to H = Hinge loss, W or W2 = L2 weight regularization, LCL = data dependent regularizer applied on last layer only, P= pruning applied, Q= quantization applied. We observe that in both the cases, the figures (\ref{fig:tsne_c10_dense_H_W_LCL}) and (\ref{fig:tsne_c10_dense_H_W_LCL_Q}) have better clustering than figures (\ref{fig:tsne_c10_dense_H_W}) and (\ref{fig:tsne_c10_dense_H_W_Q})}
	\label{fig:tSNE_visualization}
\end{figure*}

\begin{table*}[htbp]
	\centering
	\caption{Accuracies for various methods for 1 hidden layer FNN}
	\scalebox{0.7}{
		
		\begin{tabular}{|l|r|r|r|r|r|r|r|r|r|r|r|r|}
			\hline
			& \multicolumn{6}{|c|}{Unpruned}                  & \multicolumn{6}{|c|}{Pruned} \\
			& \multicolumn{1}{|l|}{H} & \multicolumn{1}{|l|}{H+W2} & \multicolumn{1}{|l|}{H+W1} & \multicolumn{1}{|l|}{H+LCA} & \multicolumn{1}{|l|}{H+W2+LCA} & \multicolumn{1}{|l|}{H+W1+LCA} & \multicolumn{1}{|l|}{H} & \multicolumn{1}{|l|}{H+W2} & \multicolumn{1}{|l|}{H+W1} & \multicolumn{1}{|l|}{H+LCA} & \multicolumn{1}{|l|}{H+W2+LCA} & \multicolumn{1}{|l|}{H+W1+LCA} \\
			\hline
			a9a   & 0.826 & 0.848 & 0.849 & 0.848 & 0.848 & \textbf{0.849} & 0.818 & 0.842 & 0.840 & 0.839 & \textbf{0.842} & 0.840 \\
			acoustic & \textbf{0.781} & \textbf{0.781} & \textbf{0.781} & 0.778 & 0.773 & \textbf{0.781} & \textbf{0.779} & \textbf{0.779} & \textbf{0.779} & 0.778 & 0.769 & \textbf{0.779} \\
			connect-4 & 0.815 & \textbf{0.820} & 0.819 & 0.812 & 0.813 & 0.819 & 0.809 & \textbf{0.810} & \textbf{0.810} & 0.809 & 0.805 & \textbf{0.810} \\
			dna   & 0.851 & 0.941 & \textbf{0.954} & 0.938 & 0.941 & 0.953 & 0.845 & 0.938 & \textbf{0.950} & 0.930 & 0.938 & 0.944 \\
			ijcnn & 0.968 & 0.964 & \textbf{0.974} & 0.965 & 0.964 & \textbf{0.974} & 0.962 & 0.955 & \textbf{0.967} & 0.956 & 0.955 & \textbf{0.967} \\
			mnist & \textbf{0.968} & \textbf{0.968} & 0.938 & 0.947 & 0.940 & 0.933 & \textbf{0.959} & \textbf{0.959} & 0.929 & 0.940 & 0.937 & 0.930 \\
			protein & 0.617 & 0.676 & \textbf{0.685} & 0.667 & 0.676 & \textbf{0.685} & 0.614 & 0.668 & \textbf{0.677} & 0.658 & 0.668 & \textbf{0.677} \\
			seismic & 0.737 & 0.740 & \textbf{0.741} & 0.738 & 0.740 & \textbf{0.741} & 0.729 & 0.736 & \textbf{0.741} & 0.738 & 0.736 & \textbf{0.741} \\
			w8a   & \textbf{0.988} & \textbf{0.988} & \textbf{0.988} & 0.984 & 0.982 & \textbf{0.988} & 0.979 & \textbf{0.981} & 0.979 & 0.974 & 0.972 & 0.979 \\
			webspam uni & \textbf{0.985} & \textbf{0.985} & \textbf{0.985} & 0.984 & 0.971 & \textbf{0.985} & \textbf{0.978} & \textbf{0.978} & \textbf{0.978} & 0.975 & 0.963 & \textbf{0.978} \\
			\hline
		\end{tabular}%
	}
	\label{tab:acc_NN1_pr}%
\end{table*}%

% Table generated by Excel2LaTeX from sheet 'acc_all_NN1_pr'
\begin{table*}[htbp]
	\centering
	\caption{Accuracies for various methods for 2 hidden layer FNN}
	\scalebox{0.7}{
		\begin{tabular}{|l|r|r|r|r|r|r|r|r|r|r|r|r|}
			\hline
			& \multicolumn{6}{|c|}{Unpruned}                  & \multicolumn{6}{|c|}{Pruned} \\
			& \multicolumn{1}{|l|}{H} & \multicolumn{1}{|l|}{H+W2} & \multicolumn{1}{|l|}{H+W1} & \multicolumn{1}{|l|}{H+LCA} & \multicolumn{1}{|l|}{H+W2+LCA} & \multicolumn{1}{|l|}{H+W1+LCA} & \multicolumn{1}{|l|}{H} & \multicolumn{1}{|l|}{H+W2} & \multicolumn{1}{|l|}{H+W1} & \multicolumn{1}{|l|}{H+LCA} & \multicolumn{1}{|l|}{H+W2+LCA} & \multicolumn{1}{|l|}{H+W1+LCA} \\
			\hline
			a9a   & 0.831 & \textbf{0.849} & 0.845 & 0.843 & 0.847 & 0.841 & 0.827 & \textbf{0.841} & 0.840 & 0.834 & 0.839 & 0.831 \\
			acoustic & 0.777 & 0.775 & \textbf{0.779} & 0.776 & 0.775 & 0.777 & \textbf{0.777} & 0.766 & 0.775 & 0.771 & 0.766 & 0.767 \\
			connect-4 & 0.804 & 0.817 & 0.816 & 0.816 & 0.821 & \textbf{0.824} & 0.798 & 0.813 & 0.811 & 0.808 & 0.815 & \textbf{0.823} \\
			dna   & 0.812 & 0.938 & \textbf{0.957} & 0.906 & 0.938 & 0.895 & 0.803 & 0.930 & \textbf{0.954} & 0.898 & 0.930 & 0.886 \\
			ijcnn & \textbf{0.982} & 0.980 & 0.979 & 0.972 & 0.980 & 0.979 & \textbf{0.977} & 0.972 & 0.973 & 0.967 & 0.972 & 0.974 \\
			mnist & 0.953 & 0.957 & 0.958 & 0.953 & \textbf{0.959} & 0.943 & 0.948 & 0.955 & \textbf{0.957} & 0.944 & 0.952 & 0.939 \\
			protein & 0.596 & 0.664 & \textbf{0.670} & 0.605 & 0.664 & 0.604 & 0.587 & 0.658 & \textbf{0.670} & 0.599 & 0.658 & 0.597 \\
			seismic & 0.744 & 0.743 & \textbf{0.746} & 0.725 & 0.738 & 0.738 & 0.743 & 0.740 & \textbf{0.746} & 0.725 & 0.733 & 0.731 \\
			w8a   & 0.986 & 0.985 & 0.986 & 0.972 & 0.975 & \textbf{0.987} & \textbf{0.978} & 0.976 & \textbf{0.978} & 0.970 & 0.970 & 0.977 \\
			webspam uni & \textbf{0.986} & 0.983 & \textbf{0.986} & 0.969 & 0.983 & 0.981 & \textbf{0.979} & 0.978 & \textbf{0.979} & 0.965 & 0.978 & 0.978 \\
			\hline
		\end{tabular}%
	}
	\label{tab:acc_NN2_pr}%
\end{table*}%

% Table generated by Excel2LaTeX from sheet 'acc_all_NN1_pr'
\begin{table*}[htbp]
	\centering
	\caption{Accuracies for various methods for 3 hidden layer FNN}
	\scalebox{0.7}{
		\begin{tabular}{|l|r|r|r|r|r|r|r|r|r|r|r|r|}
			\hline
			& \multicolumn{6}{|c|}{Unpruned}                  & \multicolumn{6}{|c|}{Pruned} \\
			& \multicolumn{1}{|l|}{H} & \multicolumn{1}{|l|}{H+W2} & \multicolumn{1}{|l|}{H+W1} & \multicolumn{1}{|l|}{H+LCA} & \multicolumn{1}{|l|}{H+W2+LCA} & \multicolumn{1}{|l|}{H+W1+LCA} & \multicolumn{1}{|l|}{H} & \multicolumn{1}{|l|}{H+W2} & \multicolumn{1}{|l|}{H+W1} & \multicolumn{1}{|l|}{H+LCA} & \multicolumn{1}{|l|}{H+W2+LCA} & \multicolumn{1}{|l|}{H+W1+LCA} \\
			\hline
			a9a   & 0.832 & \textbf{0.847} & 0.845 & 0.845 & \textbf{0.847} & 0.840 & 0.822 & 0.838 & 0.836 & \textbf{0.840} & 0.838 & 0.831 \\
			acoustic & 0.779 & 0.775 & \textbf{0.779} & 0.775 & 0.775 & 0.771 & \textbf{0.777} & 0.772 & \textbf{0.777} & 0.769 & 0.772 & 0.767 \\
			connect-4 & 0.815 & \textbf{0.820} & 0.816 & 0.816 & 0.817 & 0.813 & 0.805 & 0.811 & \textbf{0.816} & 0.808 & 0.808 & 0.808 \\
			dna   & 0.761 & 0.934 & \textbf{0.957} & 0.903 & 0.932 & 0.856 & 0.756 & 0.931 & \textbf{0.950} & 0.902 & 0.922 & 0.852 \\
			ijcnn & 0.980 & \textbf{0.982} & 0.981 & 0.977 & \textbf{0.982} & 0.977 & 0.972 & 0.973 & 0.974 & \textbf{0.975} & 0.973 & 0.971 \\
			mnist & 0.958 & 0.960 & \textbf{0.961} & 0.954 & 0.955 & 0.945 & 0.955 & 0.954 & \textbf{0.960} & 0.951 & 0.946 & 0.944 \\
			protein & 0.621 & 0.656 & \textbf{0.675} & 0.657 & 0.668 & 0.627 & 0.614 & 0.648 & \textbf{0.668} & 0.651 & 0.662 & 0.617 \\
			seismic & 0.736 & \textbf{0.745} & 0.742 & 0.728 & 0.727 & 0.739 & 0.736 & \textbf{0.742} & 0.735 & 0.728 & 0.722 & 0.735 \\
			w8a   & 0.970 & 0.981 & 0.980 & 0.973 & 0.972 & \textbf{0.982} & 0.970 & 0.972 & 0.971 & 0.970 & 0.970 & \textbf{0.975} \\
			webspam uni & \textbf{0.979} & \textbf{0.979} & \textbf{0.979} & \textbf{0.979} & \textbf{0.979} & \textbf{0.979} & 0.973 & 0.970 & \textbf{0.973} & \textbf{0.973} & 0.970 & \textbf{0.973} \\
			\hline
		\end{tabular}%
	}
	\label{tab:acc_NN3_pr}%
\end{table*}%

\begin{figure*}[!ht]
	\centering
	\begin{subfigure}[b]{0.3\textwidth}
		\centering
		\includegraphics[scale=0.22]{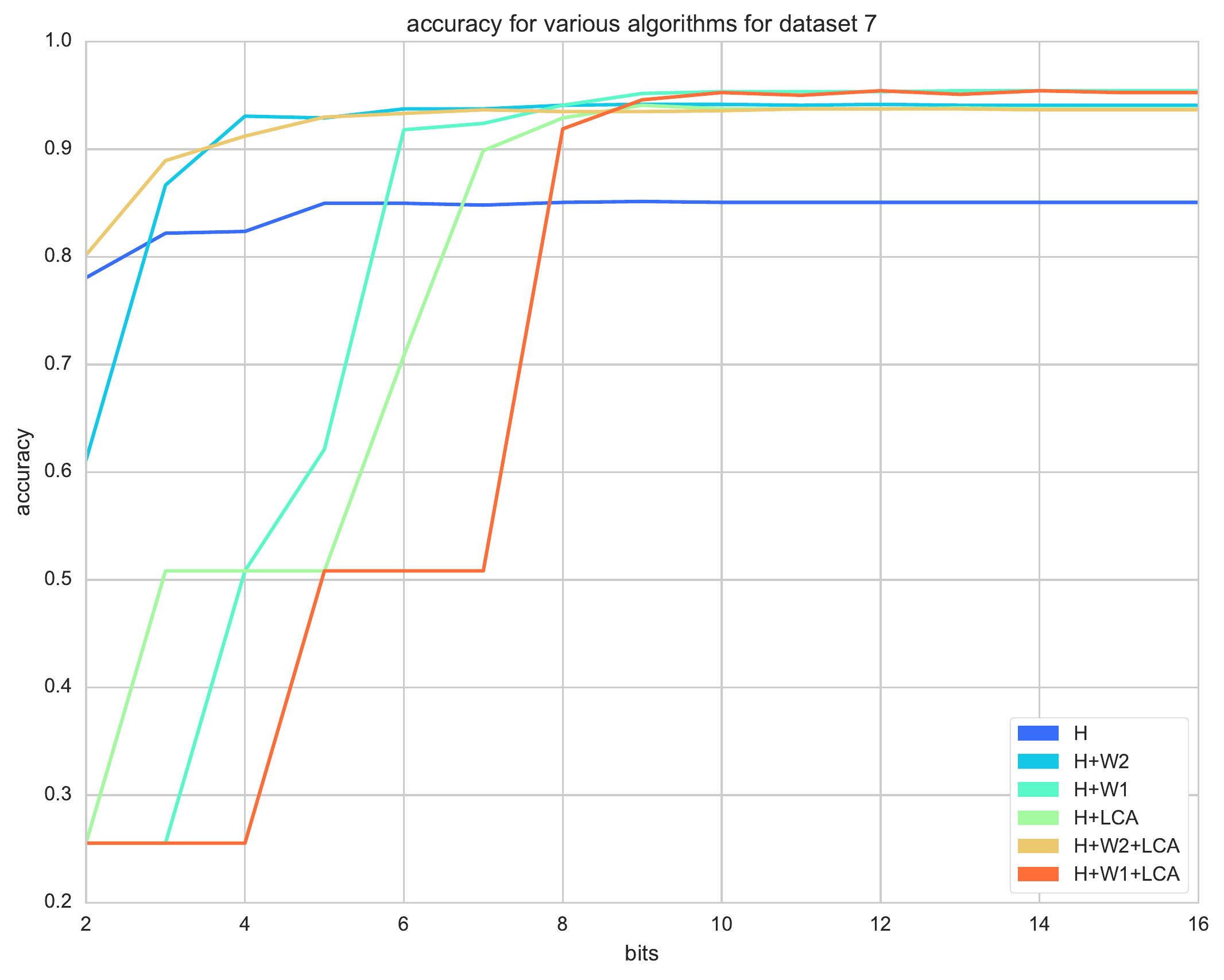}
		\caption{accuracy FNN1}
		\label{fig:acc_NN1_7}
	\end{subfigure}
	%add desired spacing between images, e. g. ~, \quad, \qquad, \hfill etc. 
	%(or a blank line to force the subfigure onto a new line)
	\begin{subfigure}[b]{0.3\textwidth}
		\centering
		\includegraphics[scale=0.22]{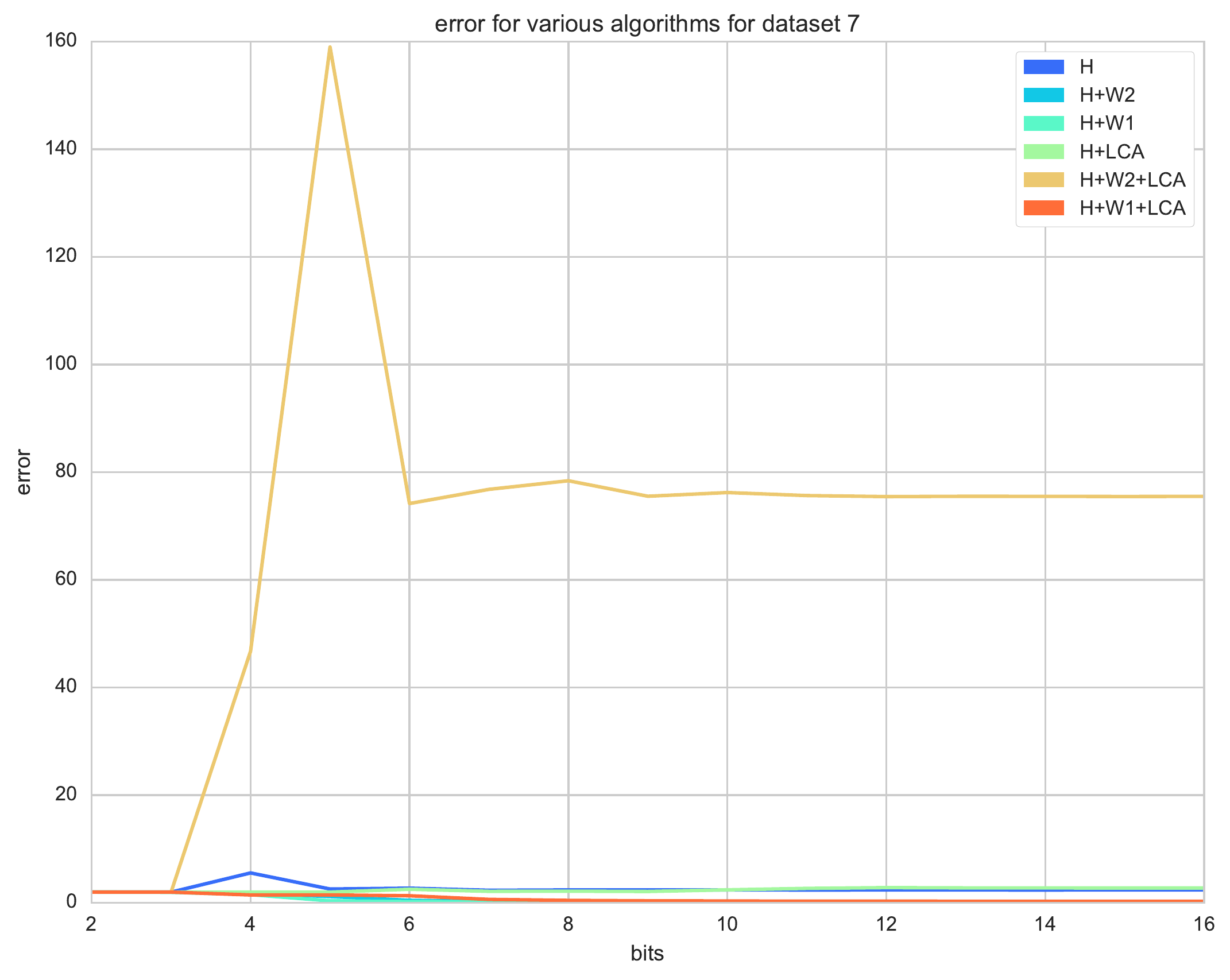}
		\caption{loss FNN1}
		\label{fig:err_NN1_7}
	\end{subfigure}
	\begin{subfigure}[b]{0.3\textwidth}
		\centering
		\includegraphics[scale=0.22]{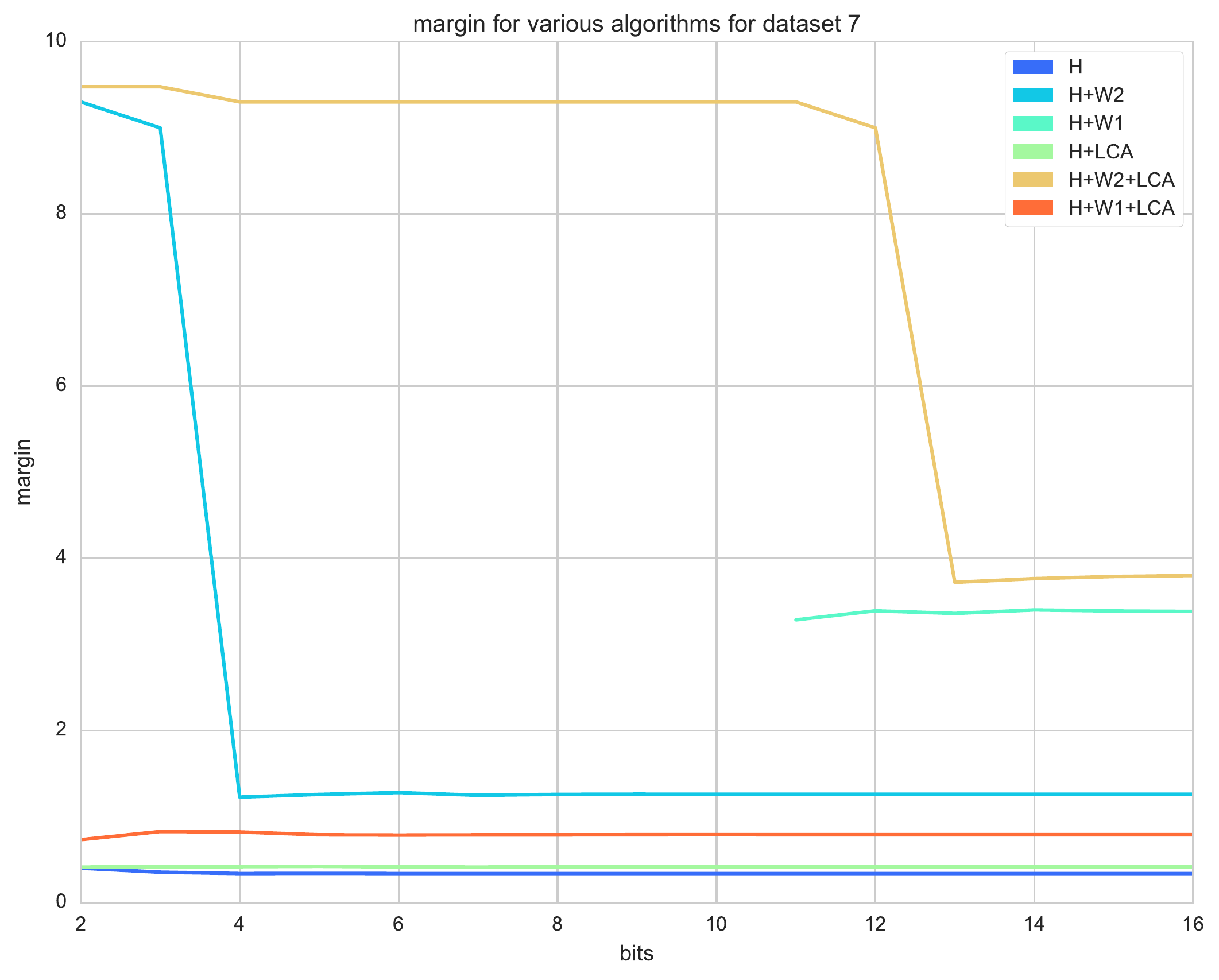}
		\caption{margin FNN1}
		\label{fig:mar_NN1_7}
	\end{subfigure}
	
	\begin{subfigure}[b]{0.3\textwidth}
		\centering
		\includegraphics[scale=0.22]{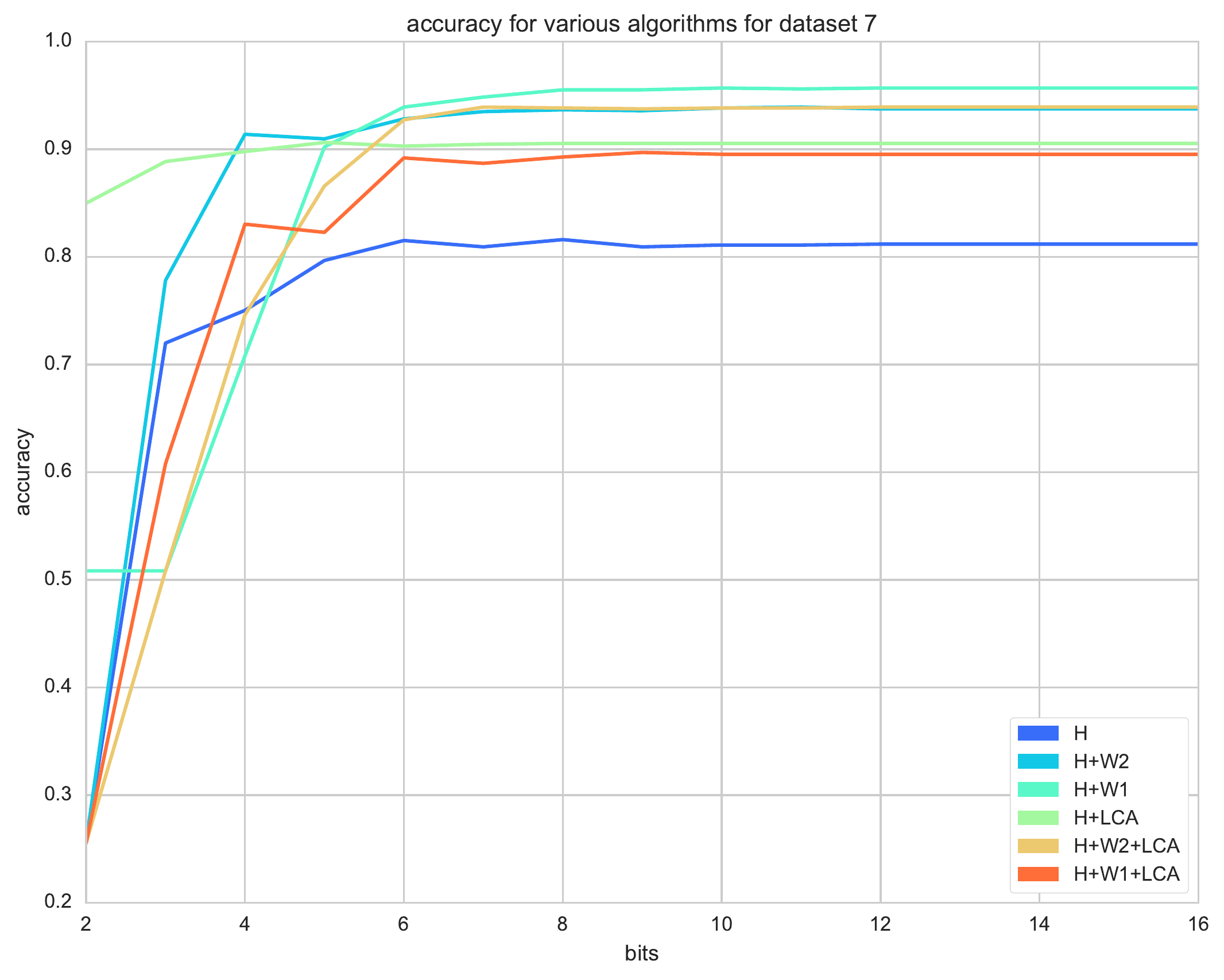}
		\caption{accuracy FNN2}
		\label{fig:acc_NN2_7}	
	\end{subfigure}
	%	add desired spacing between images, e. g. ~, \quad, \qquad, \hfill etc. 
	%	(or a blank line to force the subfigure onto a new line)
	\begin{subfigure}[b]{0.3\textwidth}
		\centering
		\includegraphics[scale=0.22]{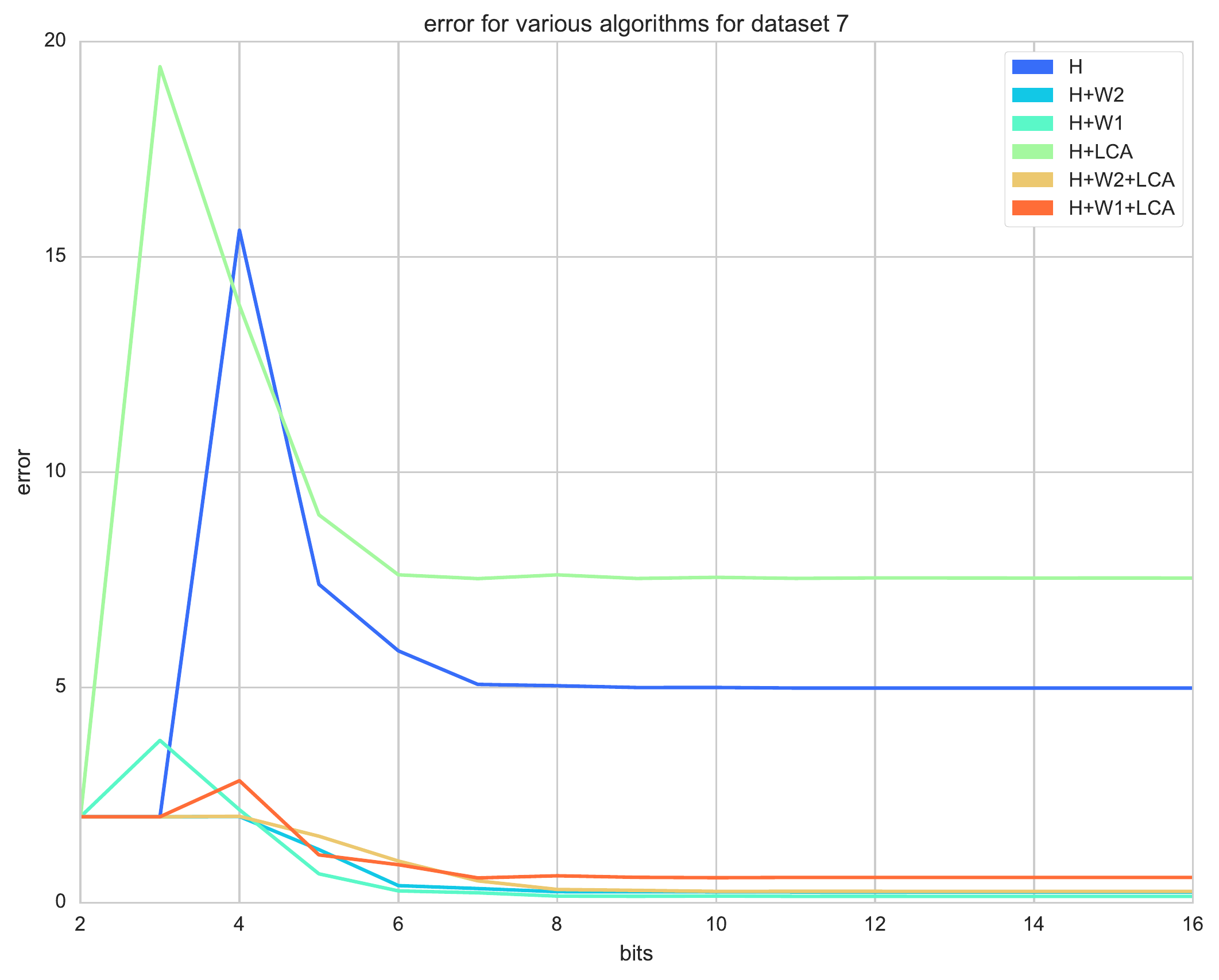}
		\caption{loss FNN2}
		\label{fig:err_NN2_7}
	\end{subfigure}
	\begin{subfigure}[b]{0.3\textwidth}
		\centering
		\includegraphics[scale=0.22]{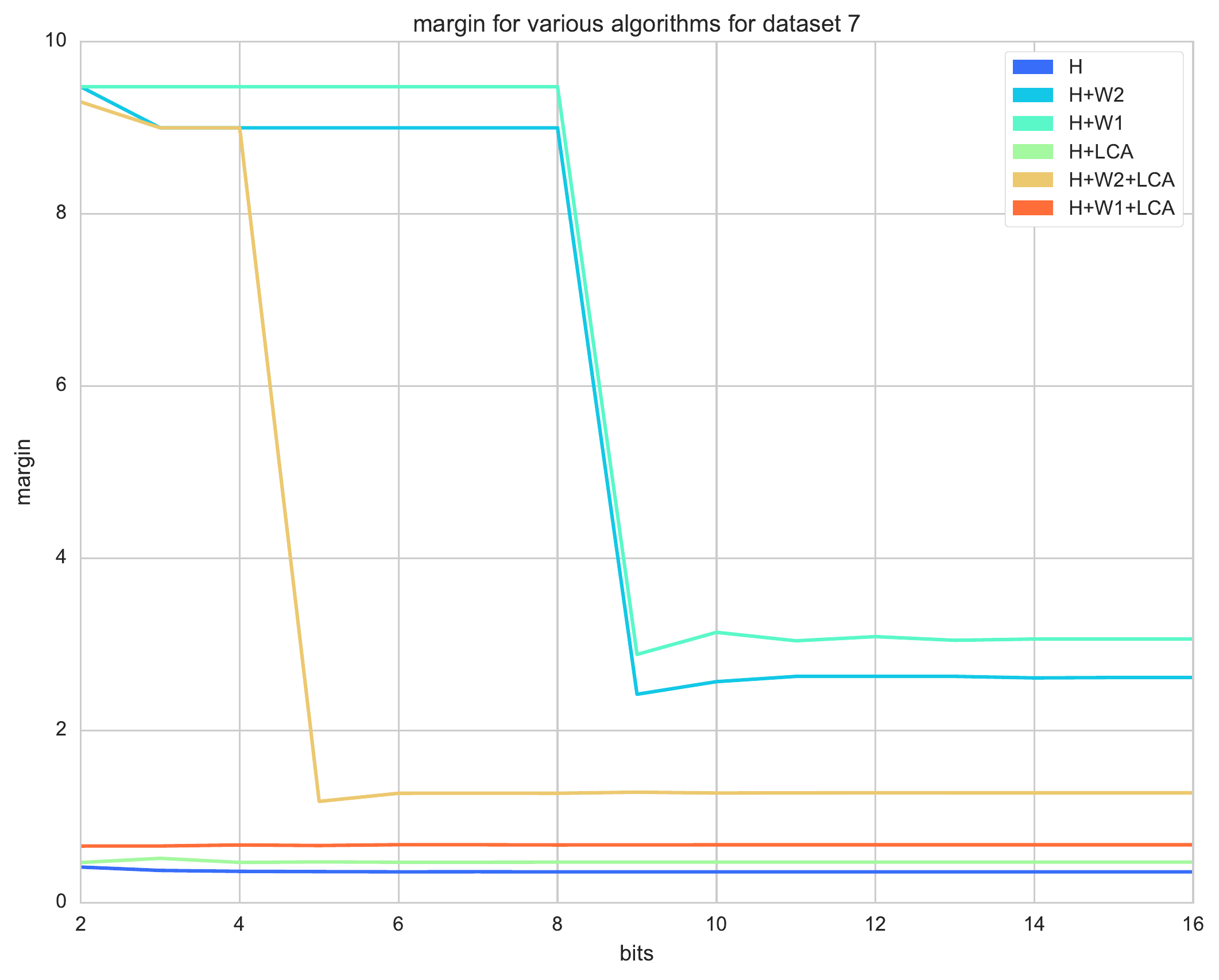}
		\caption{margin FNN2}
		\label{fig:mar_NN2_7}
	\end{subfigure}
	
	\begin{subfigure}[b]{0.3\textwidth}
		\centering
		\includegraphics[scale=0.22]{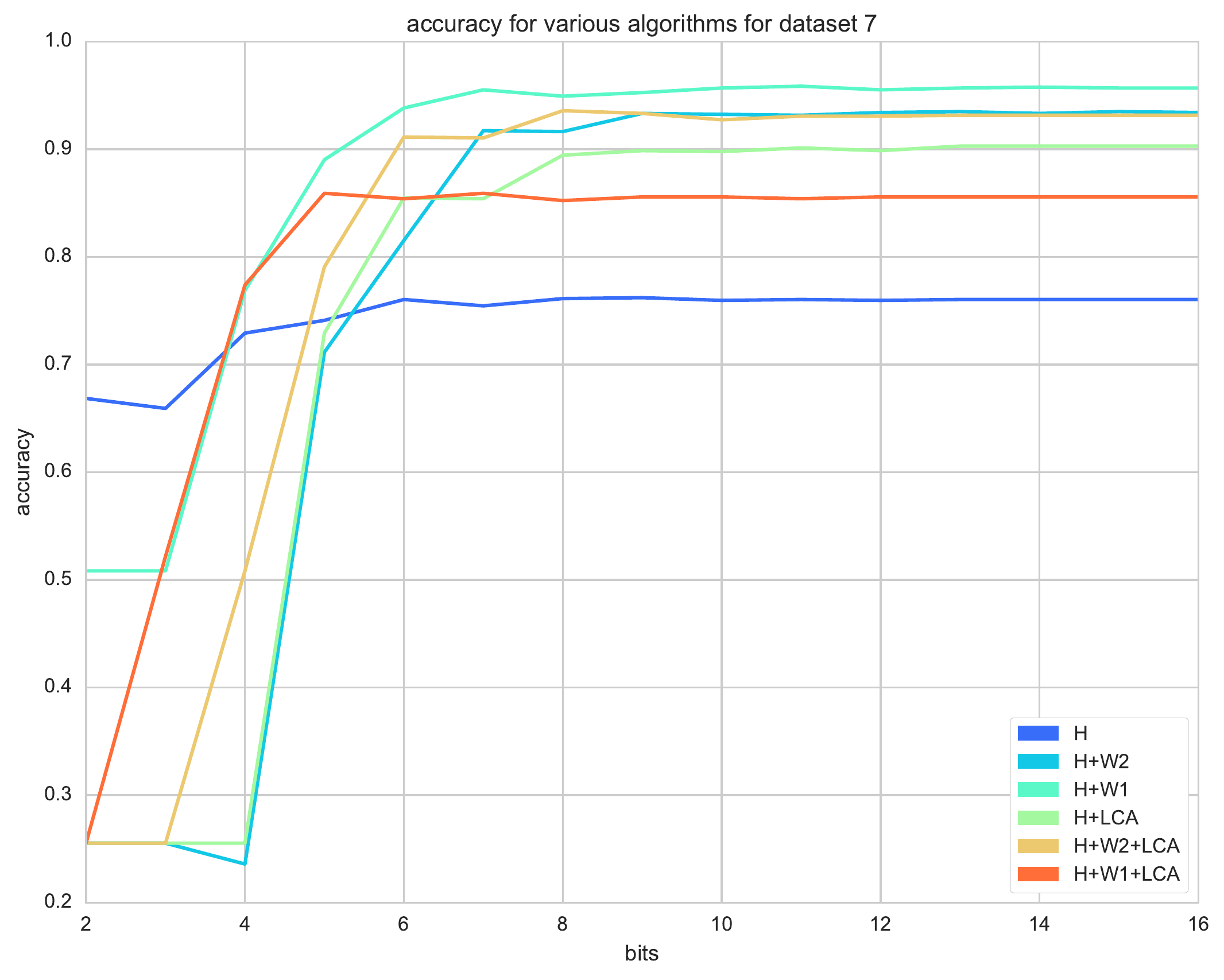}
		\caption{accuracy FNN3}
		\label{fig:acc_NN3_7}
	\end{subfigure}
	%	add desired spacing between images, e. g. ~, \quad, \qquad, \hfill etc. 
	%	(or a blank line to force the subfigure onto a new line)
	\begin{subfigure}[b]{0.3\textwidth}
		\centering
		\includegraphics[scale=0.22]{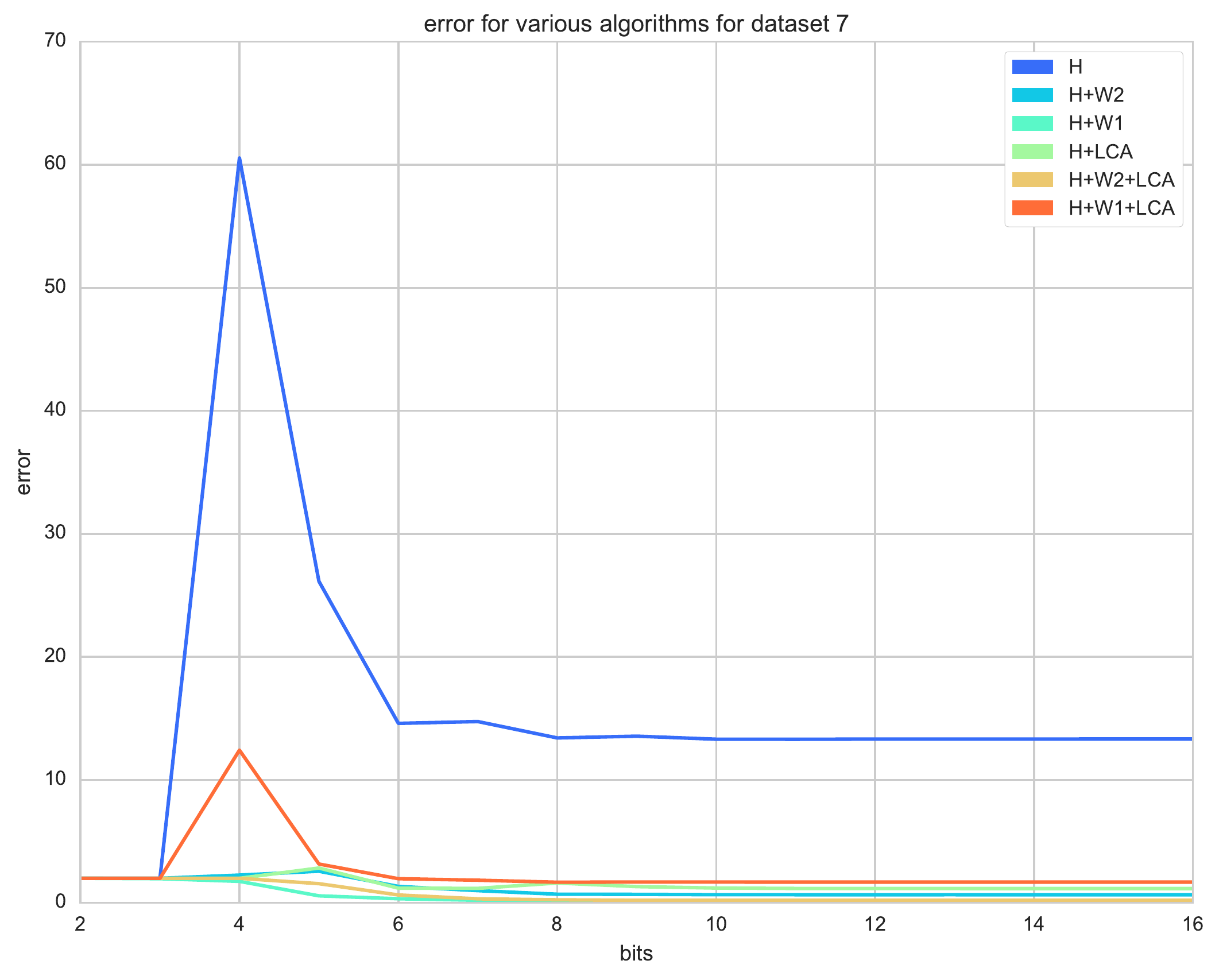}
		\caption{loss FNN3}
		\label{fig:err_NN3_7}
	\end{subfigure}
	\begin{subfigure}[b]{0.3\textwidth}
		\centering
		\includegraphics[scale=0.22]{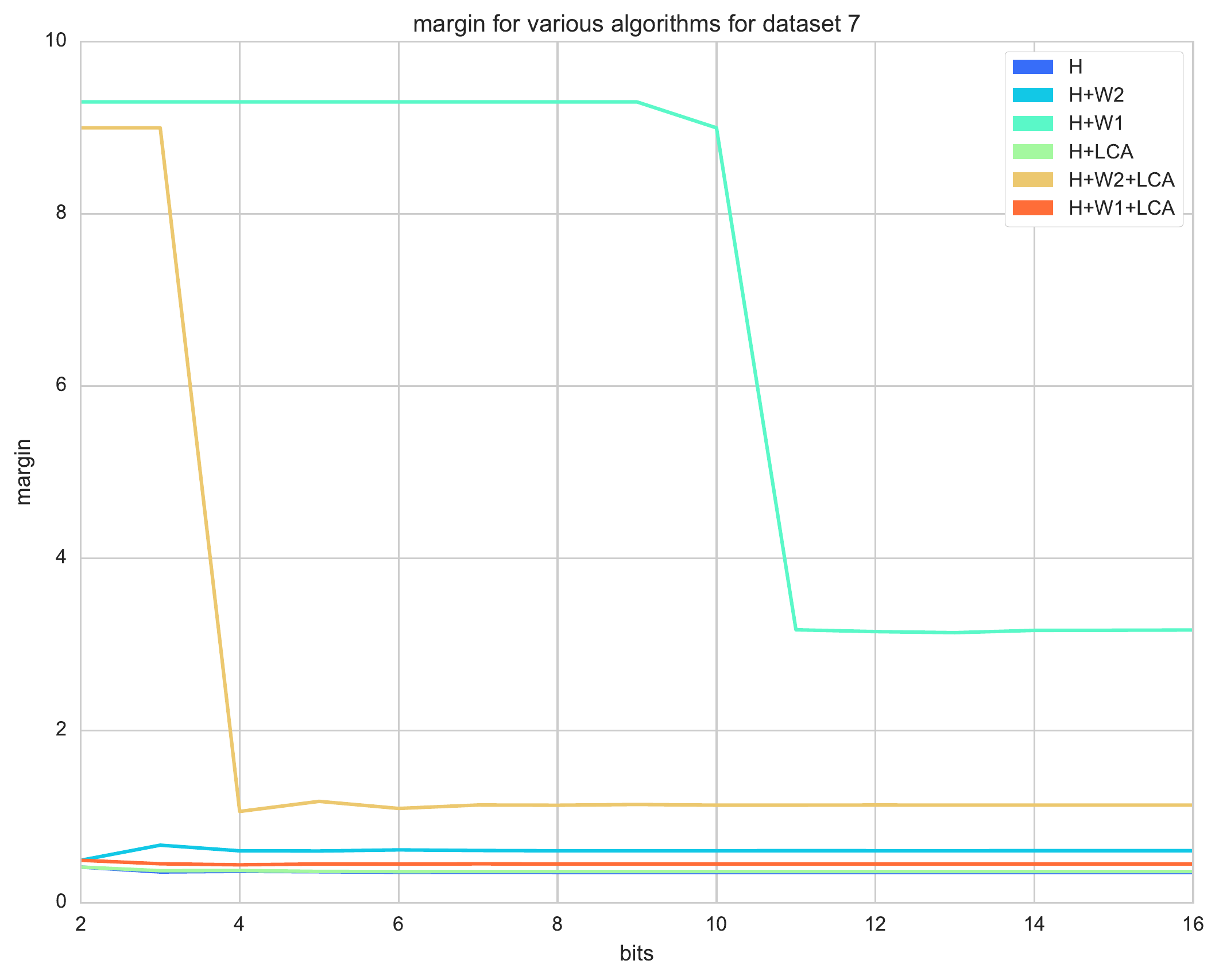}
		\caption{margin FNN3}
		\label{fig:mar_NN3_7}
	\end{subfigure}
	\caption{Effect of quantization on accuracy, margin and loss function for 1,2 and 3 hidden layer FNN for dataset 'dna'. Here we see that even on decreasing the number of total number of bits (applying brute force to determine the number of fraction bits using 1\% error tolerance), the accuracy does not significantly decrease even if total number of bits are as close to 4. In a peculiar observation, we see that for all the cases, at some value of total number of bits, the accuracy increases slightly compared to full precision. This value is different for different regularizers.}
	\label{fig:fnn_acc_margin}
\end{figure*}

Similar results were obtained for Cifar100 and MNIST datasets. Results of which can be found in supplementary section.
\subsubsection{FNNs: Datasets}
% Table generated by Excel2LaTeX from sheet 'datasets'
We use 10 datasets from LIBSVM website \cite{chang2011libsvm}, to demonstrate the effectiveness of our method when compared to other methods. The datasets vary in the number of features, classes, and training set sizes thus covering a wide variety of applications of neural networks.
\begin{table}[htbp]
	\centering
	\caption{Datasets used for FNN experiments adopted from \cite{chang2011libsvm}}
		\scalebox{0.7}{
	\begin{tabular}{|l|r|r|r|r|r|}
		\hline
		name  & \multicolumn{1}{|l|}{features} & \multicolumn{1}{|l|}{classes} & \multicolumn{1}{|l|}{train size} & \multicolumn{1}{|l|}{val size} & \multicolumn{1}{|l|}{test size} \\
		\hline
		a9a   & 122   & 2     & 26049 & 6512  & 16281 \\	
		acoustic & 50    & 3     & 63058 & 15765 & 19705 \\
		connect-4 & 126   & 3     & 40534 & 13512 & 13511 \\
		dna   & 180   & 3     & 1400  & 600   & 1186 \\
		ijcnn & 22    & 2     & 35000 & 14990 & 91701 \\
		mnist & 778   & 10    & 47999 & 12001 & 10000 \\
		protein & 357   & 3     & 14895 & 2871  & 6621 \\
		seismic & 50    & 3     & 63060 & 15763 & 19705 \\
		w8a   & 300   & 2     & 39800 & 9949  & 14951 \\
		webspam uni & 254   & 2     & 210000 & 70001 & 69999 \\
		\hline
	\end{tabular}%
}
	\label{tab:datasets_FNN}%
\end{table}%

\subsubsection{FNNs: Experiments}
In these set of experiments we show the individual effect of pruning and quantization on a wide range of regularizers prevalent in the neural network domain. We also test the efficacy of our regularizer in achieving sparsity across various neural network sizes ranging from 1 hidden layer to 3 hidden layers. The number neurons in each layer was set to 50.\\ 
Tables (\ref{tab:acc_NN1_pr}-\ref{tab:acc_NN3_pr}) shows the accuracy obtained for the datasets in case of unpruned and pruned network. We vary the number of hidden layers from 1 to 3 and evaluate the test set accuracies. We find that for 1 hidden layer FNN, $L_1$ weight regularization and $L_1$ regularization with data dependent term have the highest accuracies for 7 out of 10 datasets, whereas for pruned network $L_1$ regularization has the best performance. Similar observations can be made about networks with two and three hidden layers, where $L_1$ regularization has the best performances in terms of accuracies. \\
Tables (\ref{tab:comp_NN1_pr})-(\ref{tab:comp_NN3_pr}) demonstrates the compression ratio $r$ for individual networks. We observe that the regularizers with data dependent term outperforms in 9 out of 10 for network with 1 hidden layer, 7 out of 10 in networks with two hidden layers and 8 out of 10 in networks with 3 hidden layers. The compressions ranges from 1.0 to 5063 with just pruning.\\

% Table generated by Excel2LaTeX from sheet 'acc_all_NN1_pr'

Tables (\ref{tab:comp_NN1_pr})-(\ref{tab:comp_NN3_pr}) exhibits the compression ratio achieved by various regularizers. Here the compression ratio is defined as  $r=\frac{\text{total number of non-zero weights before pruning}}{\text{total number of non-zero weights after pruning}}$.
% Table generated by Excel2LaTeX from sheet 'com_rat_NN1'
\begin{table}[htbp]
	\centering
	\caption{Compression ratios for various methods for 1 hidden layer FNN}
	\scalebox{0.6}{
	\begin{tabular}{|l|r|r|r|r|r|r|}
		\hline
		& \multicolumn{1}{|l|}{H} & \multicolumn{1}{|l|}{H+W2} & \multicolumn{1}{|l|}{H+W1} & \multicolumn{1}{|l|}{H+LCA} & \multicolumn{1}{|l|}{H+W2+LCA} & \multicolumn{1}{|l|}{H+W1+LCA} \\
		\hline
		    a9a   & 2.1   & \textbf{133.0} & 99.2  & 2.4   & \textbf{133.0} & 99.2 \\
		acoustic & 1.2   & 1.2   & 1.2   & 1.0   & \textbf{1.5} & 1.2 \\
		connect-4 & 1.5   & 2.3   & \textbf{5.1} & 1.1   & 2.3   & \textbf{5.1} \\
		dna   & 1.8   & 167.3 & \textbf{262.9} & 19.5  & 167.3 & 85.2 \\
		ijcnn & 1.5   & \textbf{7.0} & 3.2   & 1.6   & \textbf{7.0} & 3.2 \\
		mnist & 2.3   & 2.3   & 2.3   & 1.8   & 3.0   & \textbf{6.5} \\
		protein & 1.8   & \textbf{37.1} & 35.3  & 4.6   & \textbf{37.1} & 35.3 \\
		seismic & 1.1   & \textbf{1.4} & 1.3   & 1.0   & \textbf{1.4} & 1.3 \\
		w8a   & 3.1   & 75.0  & 3.1   & 1377.5 & \textbf{1515.2} & 3.1 \\
		webspam uni & 1.3   & 1.3   & 1.3   & 2.2   & \textbf{2.8} & 1.3 \\
		\hline
	\end{tabular}%
}
	\label{tab:comp_NN1_pr}%
\end{table}%

% Table generated by Excel2LaTeX from sheet 'com_rat_NN1'
\begin{table}[htbp]
	\centering
	\caption{Compression ratios for various methods for 2 hidden layer FNN}
	\scalebox{0.6}{
	\begin{tabular}{|l|r|r|r|r|r|r|}
		\hline
		& \multicolumn{1}{|l|}{H} & \multicolumn{1}{|l|}{H+W2} & \multicolumn{1}{|l|}{H+W1} & \multicolumn{1}{|l|}{H+LCA} & \multicolumn{1}{|l|}{H+W2+LCA} & \multicolumn{1}{|l|}{H+W1+LCA} \\
		\hline
		    a9a   & 1.3   & 157.1 & 72.1  & 3.2   & \textbf{204.7} & 23.6 \\
		acoustic & 1.0   & 2.0   & 2.0   & 1.4   & 2.0   & \textbf{2.7} \\
		connect-4 & 1.4   & 1.9   & \textbf{5.0} & 1.6   & 3.8   & 4.4 \\
		dna   & 1.4   & 55.7  & \textbf{172.8} & 4.0   & 55.7  & 2.7 \\
		ijcnn & 1.6   & 7.1   & \textbf{8.1} & 2.0   & 7.1   & 6.4 \\
		mnist & 1.4   & 2.9   & 2.4   & 1.5   & 2.9   & \textbf{8.1} \\
		protein & 1.3   & \textbf{51.4} & 35.8  & 1.3   & \textbf{51.4} & 2.4 \\
		seismic & 1.1   & 2.1   & 1.5   & 1.0   & 2.1   & \textbf{6.1} \\
		w8a   & 4.4   & 4.5   & 4.4   & 2212.8 & \textbf{2528.9} & 74.7 \\
		webspam uni & 1.6   & \textbf{4.7} & 1.6   & 1.8   & \textbf{4.7} & \textbf{4.7} \\
		\hline
	\end{tabular}%
}
	\label{tab:comp_NN2_pr}%
\end{table}%

% Table generated by Excel2LaTeX from sheet 'com_rat_NN1'
\begin{table}[htbp]
	\centering
	\caption{Compression ratios for various methods for 3 hidden layer FNN}
	\scalebox{0.6}{
	\begin{tabular}{|l|r|r|r|r|r|r|}
		\hline
		& \multicolumn{1}{|l|}{H} & \multicolumn{1}{|l|}{H+W2} & \multicolumn{1}{|l|}{H+W1} & \multicolumn{1}{|l|}{H+LCA} & \multicolumn{1}{|l|}{H+W2+LCA} & \multicolumn{1}{|l|}{H+W1+LCA} \\
		\hline
		    a9a   & 1.4   & \textbf{354.7} & 283.8 & 2.2   & \textbf{354.7} & 36.0 \\
		acoustic & 1.2   & 3.4   & 1.2   & 1.5   & 3.4   & \textbf{5.9} \\
		connect-4 & 1.5   & 2.1   & 3.1   & 1.5   & 3.4   & \textbf{8.7} \\
		dna   & 1.3   & 3.6   & \textbf{596.0} & 4.4   & 18.2  & 2.9 \\
		ijcnn & 2.1   & \textbf{13.5} & 12.5  & 1.8   & \textbf{13.5} & 9.0 \\
		mnist & 1.3   & 3.6   & 3.1   & 1.6   & 3.4   & \textbf{6.2} \\
		protein & 1.3   & 6.5   & 39.4  & 1.7   & \textbf{46.9} & 2.4 \\
		seismic & 1.0   & 4.8   & 10.0  & 1.0   & \textbf{82.1} & 9.5 \\
		w8a   & 1265.8 & 2531.5 & 2531.5 & 4050.4 & \textbf{5063.0} & 22.0 \\
		webspam uni & 1.9   & \textbf{9.4} & 1.9   & 1.7   & \textbf{9.4} & 2.9 \\
		\hline		
	\end{tabular}%
}
	\label{tab:comp_NN3_pr}%
\end{table}%

\subsection{FNN:Quantization}
Figures \ref{fig:fnn_acc_margin} shows effect of quantization on the generalization abilities of neural networks. We performed quantization on the trained network. We show the accuracy, margin computed as $\frac{2}{\|w\|^2}$ and loss for multiple regularizers as the total number of bits are varied from 16 to 2. For every value of total number of bits, the number of fraction bits were varied from 3 to 15 and the number which amounted to best test set accuracies was selected . We observe that for 1 hidden layer network, the $L_1$ regularizer with data dependent term despite having the highest accuracy to start with, is the least robust as it tapers of quickly with decrease in total number of bits, whereas, $L_2$ regularizer based on minimization of VC bound is the most robust. For other networks our proposed data dependent regularizer has comparable performances to other regularizers. One peculiar observation in the figures \ref{fig:fnn_acc_margin} is that, we observe a peak in a accuracy at a certain bit value. One possible explanation can be attributed to the fact that quantization noise may allow the network to reach a better minima thus achieving higher accuracies than their full precision counterpart.

% add tables
\section{Conclusion and Discussion}\label{sec:conclusion}
This paper attempts to extend the ideas of minimal complexity machines \cite{jayadeva2015learning} and learn the weights of a neural network by minimizing the empirical error and an upper bound on the VC dimension. However, an added advantage of using such bound, is in terms of reduction in model complexity. We observe that pruning and then quantizing the models helps to achieve comparable or better sparsity in terms of weights and allows for better generalization abilities.\\

We proposed a theoretical framework to reduce the model complexity of neural networks and then ran multiple experiments on various benchmark datasets. These benchmarks offer a diversity in terms of the number of samples and number of features. The results incontrovertibly demonstrate that the our data dependent regularizer generalize better than conventional CNNs and FNNs. 

The approach presented in the paper is generic, and can be adapted to many other settings and architectures. In our experiments we use a global hyperparameter for data dependent term, which can be further improved by using multiple hyperparameters for individual layers.

{\small
	\bibliographystyle{ieee}
	\bibliography{example_paper}
}

\end{document}